\newcommand{\Identity}{{\rm I\kern-.2em l}}
\newcommand{\Expect}{\mathbb{E}}
\begin{document}
\title{Cost-Effective Federated Learning Design
}%

\author{\IEEEauthorblockN{Bing Luo\IEEEauthorrefmark{1}\IEEEauthorrefmark{4},
Xiang Li\IEEEauthorrefmark{2}, Shiqiang Wang\IEEEauthorrefmark{3}, Jianwei Huang\IEEEauthorrefmark{2}\IEEEauthorrefmark{1}, 
Leandros Tassiulas\IEEEauthorrefmark{4}}
\IEEEauthorblockA{\IEEEauthorrefmark{1}Shenzhen Institute of Artificial Intelligence and Robotics for Society, China\\
\IEEEauthorrefmark{2}School of Science and Engineering, The Chinese University of Hong Kong, Shenzhen, China\\
\IEEEauthorrefmark{3}IBM T. J. Watson Research Center, Yorktown Heights, NY, USA\\
\IEEEauthorrefmark{4}Department of Electrical Engineering and Institute for Network Science, Yale University, USA\\
Email: \{\IEEEauthorrefmark{1}\IEEEauthorrefmark{4}luobing, \IEEEauthorrefmark{2}lixiang, \IEEEauthorrefmark{2}\IEEEauthorrefmark{1}jianweihuang\}@cuhk.edu.cn,
\IEEEauthorrefmark{3}wangshiq@us.ibm.com,
\IEEEauthorrefmark{4}leandros.tassiulas@yale.edu}\vspace{-2mm}
\thanks{The research of B. Luo, X. Li, and J. Huang was supported by the Shenzhen Institute of Artificial Intelligence and Robotics for Society (AIRS), the Presidential Fund from the Chinese University of Hong Kong, Shenzhen, and the AIRS International Joint Postdoctoral Fellowship. The research of L. Tassiulas was partially supported by projects ARO W911NF1810378 and ONR  N00014-19-1-2566. (Corresponding author: Jianwei Huang.)}
}

\maketitle

\begin{abstract}
Federated learning (FL) is a distributed learning paradigm that enables a large number of devices to collaboratively learn a model %
without sharing their raw data. %
Despite its practical efficiency and effectiveness, %
the %
iterative on-device learning process  
incurs a considerable cost in terms of learning time and energy consumption, 
which depends crucially on the number of selected clients and the number of local iterations in each training round. 
In this paper, %
we analyze how to 
design adaptive FL that optimally chooses
these essential control variables  %
to minimize the %
total cost  %
while ensuring convergence. 
Theoretically, we  analytically  establish  the relationship  between  the  %
total   cost  and  the control variables with the convergence upper bound. To efficiently solve the cost minimization problem,  %
we develop a low-cost sampling-based algorithm to learn the   convergence related unknown parameters. %
We derive important solution properties that effectively identify the %
design principles %
for different metric preferences.  
Practically, we  evaluate our theoretical results  both in a simulated environment and on a hardware prototype. Experimental evidence verifies our derived properties and demonstrates that our proposed solution %
achieves near-optimal performance for various datasets, different machine learning models, %
and heterogeneous system settings. %

\end{abstract}

\IEEEpeerreviewmaketitle

\section{Introduction}

\emph{Federated learning (FL)} has recently emerged as an attractive distributed learning paradigm, which
enables many clients\footnote{Depending on the type of clients, FL can be categorized into cross-device FL and cross-silo FL (clients are companies or organizations, etc.) \cite{kairouz2019advances}. This paper focuses on the former and we use ``device'' and ``client'' interchangeably.}
to collaboratively train a  model under the coordination of a central server, while keeping the training data decentralized and private\cite{mcmahan2017communication,konevcny2016federated1,kairouz2019advances,yang2019federated,li2020federated,park2019wireless}. %
In FL settings, %
the training data are generally massively distributed over a large number of devices, and the communication between the server and clients are typically operated at lower rates compared to datacenter settings. 
These unique features necessitate  FL algorithms that 
perform \emph{multiple local iterations} in parallel on \emph{a fraction of randomly sampled clients} and then aggregate the resulting model update via the central server periodically \cite{mcmahan2017communication}. 
FL has demonstrated empirical success and theoretical convergence guarantees in various heterogeneous settings, e.g., unbalanced and non-i.i.d. data distribution \cite{sattler2019robust,bonawitz2019towards,mcmahan2017communication, smith2017federated,
li2018federated, li2019convergence}. %

Because model training and information transmission for on-device FL can be both time and energy consuming,
it is necessary and important to analyze the \emph{cost} that is incurred for completing a given FL task. 
In  general,  the  cost  of  FL  includes multiple components such as  learning  time  %
and  energy consumption \cite{tran2019federated}. %
The importance of different cost components depends on the characteristics of FL systems and applications. 
For example, in a solar-based sensor network, energy consumption is the major concern for the sensors to participate in FL tasks, whereas in a multi-agent search-and-rescue task where the goal is to collaboratively learn an unknown map, achieving timely result would be the first priority. 
Therefore, a cost-effective FL design needs to \emph{jointly optimize various cost components} (e.g., learning  time and  energy consumption) \emph{for different preferences}.

A way of optimizing the cost is to adapt control variables in the FL process to achieve a properly defined objective. For example, some existing works have considered the adaptation of communication interval (i.e., the number of local iterations between two %
global aggregation rounds) for communication-efficient FL with convergence guarantees \cite{wang2019adaptive,wang2018adaptive}. 
However, a limitation in these works is that they only adapt a single control variable (i.e., communication interval) in the FL process and ignore other essential aspects, such as the number of participating clients in each round, which can have a significant impact on the energy consumption.

In this paper, we consider a \emph{multivariate} control problem for cost-efficient FL with convergence guarantees. To minimize the expected cost, we develop an algorithm that adapts various control variables in the FL process to achieve our goal. Compared to the univariate setting in existing works, our problem is much more challenging due to the following reasons: \emph{1)}~The choices of control variables are tightly coupled. \emph{2)}~The relationship between the control variables and the learning convergence rate has only been captured by an upper bound with unknown coefficients in the literature. \emph{3)} Our cost objective includes multiple components (e.g., time and energy) which can have different importance depending on the system and application scenario, whereas existing works often consider a single optimization objective such as minimizing the communication overhead.

\begin{figure}[!t]
	\centering
	\includegraphics[width=8cm,height=5cm]{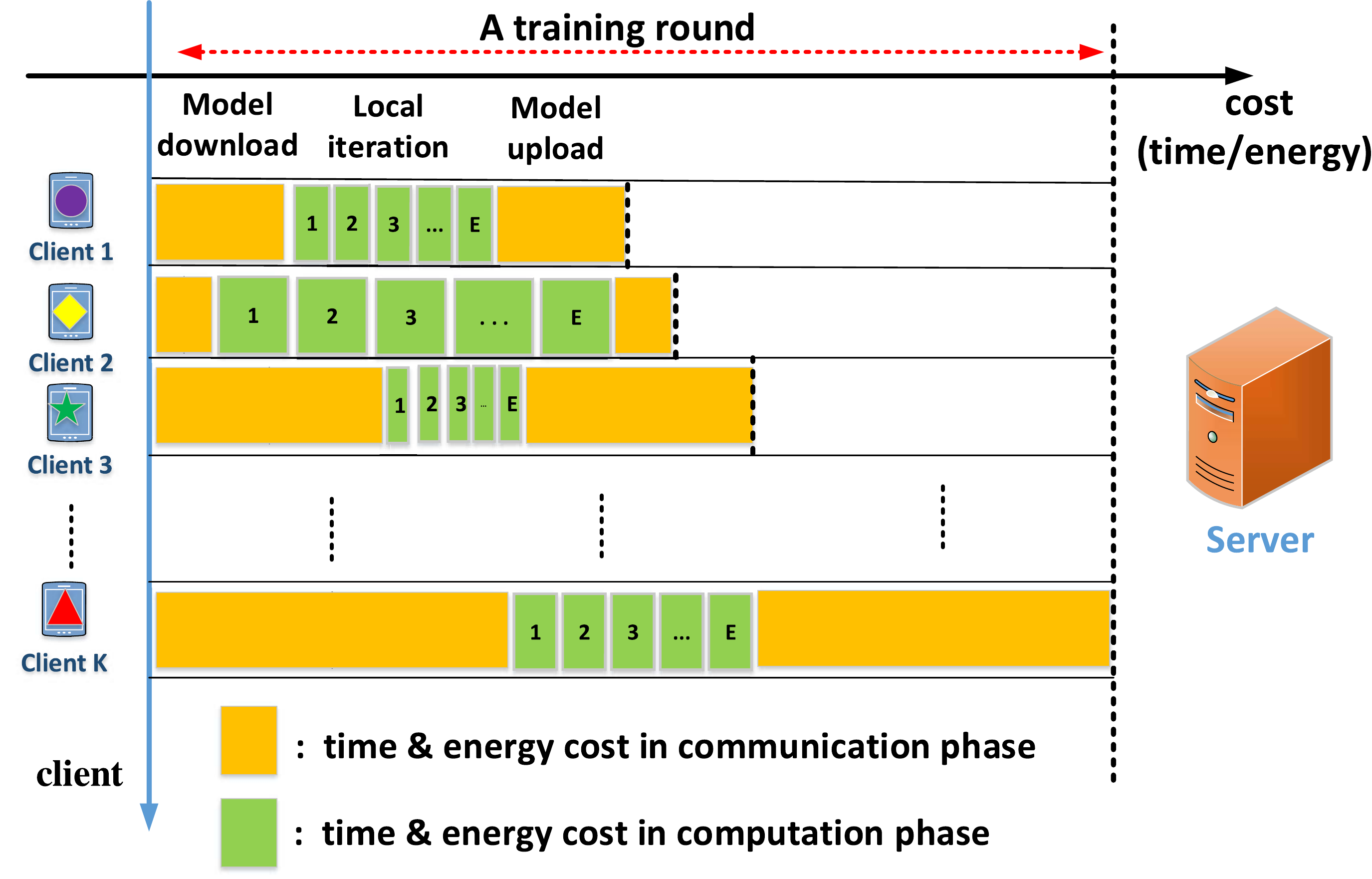}
	\caption{A typical federated learning round with $K$ sampled clients and $E$ steps of local iterations.}%
	\label{fig:intro}
\end{figure}

As illustrated in Fig.~1, %
we consider the number of participating clients (${K}$) and the number of local iterations (${E}$) in each FL round as our control variables. A  similar methodology can be applied to analyze problems with other control variables as well. We analyze, for the first time, how to design adaptive FL that optimally chooses $K$ and $E$ to minimize the %
total cost %
while ensuring convergence.  %
Our main contributions are as follows:

\begin{itemize}
    \item  \emph{Optimization Algorithm:}  {We establish the analytical relationship between the %
    total cost,} %
control variables, and convergence upper bound for strongly convex objective functions, based on which an optimization problem for total cost minimization is formulated and analyzed. We propose a sampling-based algorithm to learn the unknown parameters in the convergence bound with marginal estimation overhead.  %
We show that our optimization problem is 
\emph{biconvex} with respect to $K$ and $E$, {and develop efficient ways to solve it based on closed-form expressions. } %
    \item \emph{Theoretical Properties%
     :} We theoretically obtain important properties 
      that effectively identify the design principles for  %
     different %
     optimization goals. Notably, %
     the choice of $K$ leads to an interesting trade-off between learning time reduction and energy saving, with a large $K$ favoring the former while a small $K$ benefiting the later. %
     Nevertheless, we show that a relatively
     low device participation rate does not severely slow down the learning. %
{For the choice of $E$, we show that neither  a too small or too large $E$ is good for cost-effectiveness. The optimal value of $E$ also depends on the relationship between computation and communication costs.} %
     
    \item \emph{Simulation and Experimentation:} We evaluate our theoretical results %
    with real datasets,  
both in a simulated environment and on a hardware prototype with $20$ Raspberry Pi devices. %
Experimental results verify our design principles and derived properties of $K$ and $E$. 
They also demonstrate that our proposed optimization algorithm {provides near-optimal solution  for both real and synthetic datasets with non-i.i.d. data distributions. Particularly, we highlight that our approach works well with both convex non-convex machine learning models empirically.  %
}
\end{itemize}

\section{Related Work}
FL was first proposed in \cite{mcmahan2017communication}, which demonstrated FL's effectiveness of collaboratively learning a model without collecting users' data.
Compared to distributed learning in data centers, FL needs to address several unique challenges, including  non-i.i.d. and unbalanced data, limited communication bandwidth, and limited device availability (partial participation) \cite{kairouz2019advances,li2020federated}. 
 It was suggested that  FL algorithms should operate  synchronously due to its composability with other
techniques such as secure aggregation protocols \cite{bonawitz2016practical}, differential
privacy \cite{avent2017blender}, and model compression \cite{konevcny2016federated}. Hence, we consider synchronous FL in this paper with all the aforementioned characteristics.

The de facto FL algorithm %
is federated averaging (FedAvg), which performs multiple local iterations in parallel %
 on  a subset of 
 devices in each round.
A system-level FL framework was presented in \cite{bonawitz2019towards}, which demonstrates the empirical success of  FedAvg in mobile devices using TensorFlow \cite{abadi2016tensorflow}.  
Recently, a convergence bound of FedAvg was established in \cite{li2019convergence}. 
Other related distributed optimization algorithms are mostly for i.i.d. datasets \cite{yu2018parallel,stich2018local,wang2018cooperative} and full client participation \cite{khaled2019first,smith2017federated,zhou2017convergence}, which do not capture the essence of on-device FL.  
Some extensions of FedAvg considered aspects such as adding a proximal term~\cite{li2018federated} and using accelerated gradient descent methods~\cite{liu2020accelerating}.
These works did not consider optimization for cost/resource efficiency.

Literature in FL cost optimization mainly focused on  learning time and on-device energy consumption. %
The optimization of learning time was studied in \cite{samarakoon2018federated, zhu2018low, nishio2019client,wang2020optimizing,shi2020device,chen2020convergence,han2020adaptive,jiang2020pruning}, and
joint optimization for learning time and energy consumption was considered in \cite{mo2020energy,zeng2020energy,yang2019energy}.  These works considered cost-aware client scheduling \cite{samarakoon2018federated, zhu2018low, nishio2019client,wang2020optimizing}, task offloading\cite{tu2020network} and resource (e.g., transmission power, communication bandwidth, and CPU frequency) allocation \cite{shi2020device, chen2020convergence,tran2019federated,mo2020energy,zeng2020energy,yang2019energy} for \emph{pre-specified} (i.e., non-optimized) design parameters ($K$ and $E$ in our case) of the FL algorithm.  %

The optimization of a single design parameter $E$ or the amount of information exchange, in general, was studied in~\cite{wang2019adaptive,wang2018adaptive,tran2019federated,han2020adaptive,luping2019cmfl,hsieh2017gaia}, most of which assume full client participation and can be infeasible for large-scale on-device FL. A very recent work in \cite{jin2020resource} considered the optimization of both $E$ and client selection for additive per-client costs. {However, the cost related to learning time in our problem is non-additive on a per-client basis, because different clients perform local model updates in parallel. In addition, the convergence bound used in~\cite{jin2020resource} (and also~\cite{tran2019federated}) is for a primal-dual optimization algorithm, which is different from the commonly used FedAvg algorithm  and does not reflect the impact of key FL characteristics such as partial client participation.
The challenge in optimizing both $K$ and $E$ for cost minimization of FedAvg that takes into account all the aforementioned FL characteristics, which also distinguishes our work from the above, is the need %
\emph{to analytically connect the total cost with multiple control variables as well as with the convergence rate}.}

In addition, most existing work on FL are based on simulations, whereas we implement our algorithm in an actual hardware prototype with resource-constrained devices.

\textbf{Roadmap:} We present the system model and problem formulation in Section~\ref{sec:systemModel}. In Section~\ref{sec:optimizationProblem}, we analyze the cost minimization problem and present an algorithm to solve it. We provide theoretical analysis on the solution properties in Section~\ref{sec:property}.
Experimentation results are given in Section~\ref{sec:experimentation} and the
conclusion is presented in Section~\ref{sec:conclusion}.

\section{%
System Model}
\label{sec:systemModel}

We start by  summarizing the basics of FL and its de facto algorithm FedAvg. Then, we present the cost model for a given FL task, and introduce our optimization problem formulation. 

\subsection{Federated Learning}%
Consider a scenario with a large population of mobile
clients that have data for training a machine learning  model. %
Due to data privacy and bandwidth limitation concerns, it is not desirable for clients to disclose and send their raw %
data to a high-performance data center. FL is a decentralized learning framework that aims to resolve this problem. %
Mathematically, FL is the following distributed optimization problem:
\begin{equation}
\label{gl_ob}
\min_{\mathbf{w}}  F\left( \mathbf{w} \right) :=\sum\nolimits_{k = 1}^N{p_k}{F_k}\left( \mathbf{w} \right)
\end{equation}
where the objective $F\left( \mathbf{w} \right)$ is also known as the global loss function, $\mathbf{w}$ is the model parameter vector, $N$ is the total number of devices, and $p_k$ is the weight of the $k$-th device such that $\sum\nolimits_{k = 1}^N p_k=1$. 
Suppose the $k$-th device has $n_k$ training data samples ($\mathbf{x}_{k, 1}, \cdots, \mathbf{x}_{k, n_{k}}$), and the total number of training data samples across $N$ devices is $n :=\sum\nolimits_{k \!=\! 1}^N n_k$, then we have $p_k=\frac{n_k}{n}$.
The local loss function of client $k$ is
\begin{equation}
\label{lo_ob}
{F_k}\left( \mathbf{w} \right) := \frac{1}{{{n_k}}}\sum\limits_{j =1}^{n_k} {{f}\left( \mathbf{w}; \mathbf{x}_{k,j} \right)},
\end{equation}
where $f(\cdot)$ represents a per-sample loss function, e.g., mean square error and cross entropy applied to the output of a model with parameter $\mathbf{w}$ and input data sample $\mathbf{x}_{k,j}$ \cite{wang2019adaptive}. %

FedAvg (Algorithm 1) was proposed in \cite{mcmahan2017communication} to solve \eqref{gl_ob}. In each \emph{round}~$r$, a subset of randomly selected clients $\mathcal{K}^{(r)}$ run $E$ steps\footnote{$E$ is originally defined as epochs of SGD in \cite{mcmahan2017communication}. In this paper we denote $E$ as the number of local iteration for theoretically analysis.} of stochastic gradient decent (SGD) on~\eqref{lo_ob} in parallel, {where $\mathcal{K}^{(r)} \subseteq \{1,2,...,N\}$}. Then, the updated model parameters of these $\left|\mathcal{K}^{(r)}\right|$ clients are sent to and aggregated by the server. This process repeats for many rounds until the global loss converges. {Let $R$ be the total number of rounds, then the total number of iterations for each device is $ER$.} %

While FL has demonstrated its effectiveness in many application scenarios, %
practitioners also need to take into account the \emph{cost} that is incurred for completing a given task.

\begin{algorithm}[t]
\small
	\caption{Federated  Learning Algorithm}
	\label{alg:fedavg}
	\KwIn{$K$, $E$, precision $\epsilon$, initial model $\mathbf{w_0}$}
	\KwOut{Final model parameter $\mathbf{w}_R$}
	\For{$r=0,1,2,..., R$%
	}{	Server randomly selects a subset of clients  $\mathcal{K}^{(r)}$ and sends the current global model parameter $\mathbf{w}_r$ to the selected clients\label{alg:fedavgStep1}\tcp*{Communication}
		
		Each selected client $k \in \mathcal{K}^{(r)} $ in parallel updates $\mathbf{w}_r$ by running $E$ steps  of SGD  on  \eqref{lo_ob} to compute a new model $\mathbf{w}_r^{(k)}$\label{alg:fedavgStep2}\tcp*{Computation}
		
		Each selected client $k \in \mathcal{K}^{(r)}$   sends back the updated model $\mathbf{w}_r^{(k)}$ to the server\label{alg:fedavgStep3}\tcp*{Communication}
		
		Server computes the new global model parameter $\mathbf{w}_{r+1} \leftarrow \frac{\sum_{k \in \mathcal{K}^{(r)}} p_k \mathbf{w}_r^{(k)}}{\sum_{k \in \mathcal{K}^{(r)}} p_k}  $\label{alg:fedavgStep4}\tcp*{Aggregation}
		
		$r \leftarrow r+1$;
	}
\end{algorithm}

\subsection{Cost Analysis of Federated Learning}
The total cost of FL, according to Algorithm~\ref{alg:fedavg}, involves  \emph{learning time}  and  \emph{energy consumption}, both of which are consumed during local computation (Line~\ref{alg:fedavgStep2})  and global communication (Lines~\ref{alg:fedavgStep1} and \ref{alg:fedavgStep3}) in each round.  Before presenting each cost model, we first give the system assumptions. 

\emph{System assumptions:} %
Similar to existing works \cite{mcmahan2017communication, li2019convergence, li2018federated}, we sample $K$ clients in each round $r$ (i.e., $K := \left| \mathcal{K}^{(r)} \right|$) where the sampling is uniform (without replacement) out of all $N$ clients.
We assume the communication and computation cost for a particular  device in each round %
is the same%
, but varies among devices due to system heterogeneity. We do not consider the cost for model aggregation in Line~\ref{alg:fedavgStep4}, because it only needs to compute the average that is much less complex than local model updates. %

\subsubsection{\textbf{Time Cost}}
For general heterogeneous systems, each client can have different communication and computation capabilities (see Fig.~\ref{fig:intro}). Let $t_k$ denote the per-round time for client $k$ to complete computation and communication. We have
\begin{equation}
\label{Troundi}
t_{k}=t_{k,p}  E+ t_{k,m} \ \forall k \in \{1, \ldots, N\},
\end{equation}
where $t_{k,p}$ %
is the com\underline{p}utation time for client $k$ to perform one local iteration, and $t_{k,m}$ %
is the per-round co\underline{m}munication time for a client to upload/download the model parameter.

Because the clients compute and communicate in parallel, for each round $r$, the per-round time $t^{(r)}$ depends on the slowest participating client (also known as straggler).\footnote{This is because in synchronized FL systems, the server needs to collect all updates from the sampled clients before performing global aggregation.}  Hence,%
\begin{equation}
\label{Tround}
t^{(r)}=\max _{k \in \mathcal{K}^{(r)}}\left\{t_{k}\right\}.%
\end{equation}
Therefore, the total learning time $t_\textnormal{tot}$ after $R$ rounds %
is
\begin{equation}
\label{Ttot}
t_\textnormal{tot}(K,E,R)=\sum\nolimits_{r=1}^{R}\max _{k \in \mathcal{K}^{(r)}}\left\{t_{k}\right\}.%
\end{equation}

\subsubsection{\textbf{Energy Cost}}
Similarly, by denoting $e_k$ as the per-round energy cost for client $k$ to complete the computation and communication, we have
\begin{equation}
\label{energyk}
e_{k}=e_{k,p}  E+ e_{k,m},
\end{equation}
where $e_{k,p}$ and $e_{k,m}$ are respectively the energy costs for client $k$ to perform a local iteration and a round of communication. 

Unlike the \emph{straggling} effect in time cost \eqref{Tround}, the energy cost $e^{(r)}$ in each round $r$ depends on the \emph{sum} energy consumption of the selected clients $\mathcal{K}^{(r)}$. %
Therefore, the total energy cost $e_\textnormal{tot}$ after $R$ rounds %
can be expressed as
\begin{equation}
\label{etot}
e_\textnormal{tot}(K,E,R)=\sum\nolimits_{r=1}^{R}\sum\nolimits_{k \in \mathcal{K}^{(r)}}e_{k}.%
\end{equation}

\subsection{Problem Formulation}
Considering the difference of the two cost metrics, the optimal solutions of $E$, $K$ and $R$ generally do not achieve the common goal for minimizing both $t_\textnormal{tot}$ and $e_\textnormal{tot}$. %
To strike the balance of learning time and energy consumption, we introduce a weight $\gamma \in \left[0, 1\right]$ and optimize the balanced  cost function in the following form: %
\begin{equation}
\label{homocost}
    C_\textnormal{tot}(K,E,R)= \left(1-\gamma\right)
    t_\textnormal{tot}(K,E,R)+ \gamma  e_\textnormal{tot}(K,E,R), 
    \end{equation}
{where $1-\gamma$  and $\gamma$ can be interpreted as the \emph{normalized price} of the two costs, i.e., how much monetary cost for one unit of time and one unit of energy, respectively. The value of $\gamma$ can be adjusted for different preferences.} %
For example, we can set $\gamma=0$ when all clients are plugged in and energy consumption is not a major concern, whereas $\gamma=1$ when devices are solar-based sensors where saving the devices' energy is the priority.   

Our goal is to minimize the {expected} total cost while ensuring convergence, %
which translates into this problem:
\begin{equation}\begin{array}{cl}
\label{ob1}
\!\!\!\!\!\!\!\!\textbf{P1:}\quad\quad \min_{E, K, R} & \Expect[C_\textnormal{tot}(E,K,R)] \\
\quad\quad \text { s.t. } & \Expect[F(\mathbf{w}_R)]-F^{*} \le \epsilon,\\
& K,E,R \in \mathbb{Z}^{+}, \  \text{and}\ \  1 \le K \le N. 
\end{array}\end{equation}
{where $\Expect[F(\mathbf{w}_R)]$ is the expected loss after $R$ rounds, %
$F^*$ is the (true and unknown) minimum value of $F$, and $\epsilon$ is the desired precision. We note that the expectation in \textbf{P1} is due to the randomness of SGD and client sampling in each round.}%

{Solving \textbf{P1} is challenging in two aspects. First, it is difficult to find an \emph{exact analytical expression} to relate  $E$, $K$ and $R$ with $C_\textnormal{tot}$, especially due to the \emph{non-linear maximum} function in $t_\textnormal{tot}$. %
Second, it is generally impossible to obtain an exact analytical relationship
to connect $E$, $K$ and $R$  with the convergence constraint. %
In the following section, we propose an algorithm that approximately solves \textbf{P1}, which we later show with extensive experiments that the proposed solution can achieve a  near-optimal performance of \textbf{P1}.}

\section{Cost-Effective Optimization Algorithm}
\label{sec:optimizationProblem}

This section shows how to approximately solve \textbf{P1}. 
{We first formulate an %
alternative problem that  
includes an approximate %
analytical relationship between the expected cost $\Expect[C_\textnormal{tot}]$, the convergence constraint, and the control variables $E$, $K$ and $R$. 
Then, we show that this new optimization problem %
can be efficiently solved after estimating %
unknown parameters associated with the convergence bound, and we propose a sampling-based algorithm to learn these unknown parameters.}%
\subsection{Approximate Solution to \textbf{P1}}
\subsubsection{\textbf{Analytical Expression of}  $\Expect[e_\textnormal{tot}]$} We first %
analytically establish the expected energy cost $\Expect[e_\textnormal{tot}]$ with $K$ and $E$.
\newtheorem{lemma}{{Lemma}}
\begin{lemma}
\label{lemma:expect_e_tot}
The expectation of $e_\textnormal{tot}$ in \eqref{etot} can be expressed as
\begin{equation}
\label{het_energy}
   \Expect[e_\textnormal{tot}(K,E,R)]=  K\left(e_{p}E+ e_{m}\right)R,
\end{equation}
where $e_p:=\frac{\sum_{k=1}^{N} e_{k,p}}{N}$ and $e_m:=\frac{\sum_{k=1}^{N} e_{k,m}}{N}$ denote the average per-device energy consumption for one local iteration and one round of communication, respectively.
\end{lemma}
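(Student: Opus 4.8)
The plan is to exploit linearity of expectation together with the symmetry of uniform sampling without replacement, so that no independence assumption across rounds is actually needed. Starting from the definition $e_\textnormal{tot}(K,E,R)=\sum_{r=1}^{R}\sum_{k\in\mathcal{K}^{(r)}}e_k$ in \eqref{etot}, I would first rewrite the inner sum over the random set $\mathcal{K}^{(r)}$ as a sum over all $N$ clients weighted by the indicator of participation,
\begin{equation}
\sum\nolimits_{k\in\mathcal{K}^{(r)}}e_k=\sum\nolimits_{k=1}^{N}\mathbf{1}\{k\in\mathcal{K}^{(r)}\}\,e_k,
\end{equation}
and then take expectations term by term.

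The key computation is the participation probability. Since in each round $\mathcal{K}^{(r)}$ is a uniformly chosen size-$K$ subset of $\{1,\dots,N\}$ drawn without replacement, every client is equally likely to be included, and a straightforward counting argument (there are $\binom{N-1}{K-1}$ subsets of size $K$ containing a fixed client $k$, out of $\binom{N}{K}$ total) gives $\Pr(k\in\mathcal{K}^{(r)})=K/N$ for every $k$ and every $r$. Plugging this in and using the per-round energy model \eqref{energyk}, the expected per-round energy is
\begin{equation}
\Expect\Big[\sum\nolimits_{k\in\mathcal{K}^{(r)}}e_k\Big]=\frac{K}{N}\sum\nolimits_{k=1}^{N}(e_{k,p}E+e_{k,m})=K\Big(\tfrac{1}{N}\!\sum\nolimits_{k=1}^{N}e_{k,p}\,E+\tfrac{1}{N}\!\sum\nolimits_{k=1}^{N}e_{k,m}\Big)=K(e_pE+e_m),
\end{equation}
by the definitions of $e_p$ and $e_m$ in the statement.

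Finally, summing this identical expression over the $R$ rounds (again by linearity of expectation, so the correlation structure of the $\mathcal{K}^{(r)}$ across rounds is irrelevant) yields $\Expect[e_\textnormal{tot}(K,E,R)]=K(e_pE+e_m)R$, which is \eqref{het_energy}. Honestly, there is no real obstacle here: the argument is pure linearity of expectation plus the elementary participation probability $K/N$. The only point to state carefully is that the ``sum'' structure of the energy cost (as opposed to the ``max'' in the time cost) is exactly what makes the expectation decouple cleanly into a closed form, which is why this lemma is easy while the corresponding treatment of $\Expect[t_\textnormal{tot}]$ is not.
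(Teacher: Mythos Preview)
Your proof is correct and takes essentially the same approach as the paper: both use linearity of expectation and the uniform-sampling participation probability $K/N$. The paper phrases it as ``each device is sampled in $\frac{KR}{N}$ rounds in expectation'' and then sums the per-device energy over all $N$ clients, which is just a reordering of your per-round computation followed by the sum over $R$ rounds.
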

\begin{proof}
Since all devices are sampled uniformly at random in each round, for $R$ rounds, each device will be sampled in $\frac{KR}{N}$ rounds in expectation. Given that each device $k$ consumes $e_{k,p}  E+ e_{k,m}$ energy in each round as shown in \eqref{energyk}, 
summing up $\frac{KR}{N}\left(e_{k,p}  E+ e_{k,m}\right)$ over all $N$ clients leads to \eqref{het_energy}.
\end{proof}

\subsubsection{\textbf{Analytical Expression of} $\Expect[t_\textnormal{tot}]$} %
Next, we show how to tackle the  straggling effect to %
establish the expected time cost $\Expect[t_\textnormal{tot}]$ with the control variables. %
{Without loss of generality, we reorder $\{t_k: \forall k\in\{1,2,...,N\}\}$, %
such that}
\begin{equation}
\label{reorder}t_{1} \leq t_{2} \leq \ldots \leq t_{k}  \leq \ldots \leq t_{N}.\end{equation}
\begin{lemma}
\label{lemma:expect_t_tot}
With the reordered $t_k$ %
as in \eqref{reorder}, the expectation of $t_\textnormal{tot}$ in \eqref{Ttot} can be expressed as\footnote{The notation of $C_{N}^{K}$ is also noted as  ${{N} \choose {K}}$ which represents the combination number of choosing $K$ out of $N$ without replacement.} \begin{equation}
 \begin{array}{c}
\label{avgtime}
\Expect[t_\textnormal{tot}(K,E,R)]=\frac{\sum\nolimits_{i=K}^N{C_{i-1}^{K-1} t_{i}} }{C_{N}^{K}}%
R.%
 \end{array}
\end{equation}
\end{lemma}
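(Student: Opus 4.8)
The plan is to reduce the computation of $\Expect[t_\textnormal{tot}]$ to a single round. Since the client sampling is done independently and identically in each round, and the randomness of SGD does not affect the per-round time $t^{(r)}$ in \eqref{Tround} (the $t_k$ are fixed system parameters), linearity of expectation gives $\Expect[t_\textnormal{tot}(K,E,R)] = \sum_{r=1}^{R}\Expect[t^{(r)}] = R\,\Expect\!\left[\max_{k\in\mathcal{K}}t_k\right]$, where $\mathcal{K}$ is a uniformly random $K$-subset of $\{1,\dots,N\}$ drawn without replacement. So it suffices to show $\Expect[\max_{k\in\mathcal{K}}t_k] = \sum_{i=K}^{N} C_{i-1}^{K-1}t_i / C_N^K$.

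First I would use the reordering \eqref{reorder} to identify, for each index $i$, the event that $t_i$ is the straggler value. Because $t_1\le t_2\le\dots\le t_N$, we have $\max_{k\in\mathcal{K}}t_k = t_i$ on the event $A_i := \{\,i\in\mathcal{K}\ \text{and}\ \mathcal{K}\setminus\{i\}\subseteq\{1,\dots,i-1\}\,\}$, i.e. $i$ is the largest index in $\mathcal{K}$. The events $A_K,A_{K+1},\dots,A_N$ partition the sample space (every $K$-subset has a unique largest index, which is at least $K$). Counting subsets, $|\{\mathcal{K}: \max\mathcal{K}=i\}| = C_{i-1}^{K-1}$ (choose the remaining $K-1$ elements from $\{1,\dots,i-1\}$), while the total number of subsets is $C_N^K$. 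Hence $\Pr(A_i) = C_{i-1}^{K-1}/C_N^K$, and $\Expect[\max_{k\in\mathcal{K}}t_k] = \sum_{i=K}^{N} t_i\Pr(A_i)$, which is exactly the claimed expression. Multiplying by $R$ finishes the proof. (As a consistency check, $\sum_{i=K}^{N}C_{i-1}^{K-1} = C_N^K$ by the hockey-stick identity, so the probabilities sum to one.)

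An equivalent route, if one prefers a CDF argument, is to note $\Pr(\max_{k\in\mathcal{K}}t_k \le t_i) = C_i^K / C_N^K$ for $i\ge K$ (all sampled indices must lie in $\{1,\dots,i\}$) and $0$ otherwise, and then use $\Expect[\max] = \sum_{i=K}^{N} t_i\big(\Pr(\max\le t_i)-\Pr(\max\le t_{i-1})\big)$ together with Pascal's rule $C_i^K - C_{i-1}^K = C_{i-1}^{K-1}$. The only subtlety I anticipate — and the one worth a remark — is the treatment of ties among the $t_k$: when several clients share the same time value, the "index of the straggler" is not unique, but the partition above still uses indices (not values) so $A_K,\dots,A_N$ remain disjoint and exhaustive, and since the events $A_i$ are defined purely combinatorially the value identity $\max_{k\in\mathcal{K}}t_k = t_i$ on $A_i$ continues to hold; thus the formula is unaffected. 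This tie-handling is the only place the argument needs care, and it is handled by working with the ordered indices rather than the ordered values.
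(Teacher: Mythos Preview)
Your proof is correct and follows essentially the same approach as the paper: reduce to a single round by linearity of expectation, then count the $K$-subsets whose largest index is $i$ to obtain $\Pr(\max=t_i)=C_{i-1}^{K-1}/C_N^K$, with the hockey-stick (Pascal-rule) identity $\sum_{i=K}^N C_{i-1}^{K-1}=C_N^K$ appearing in both as a check. Your treatment is in fact more careful than the paper's sketch, particularly the explicit handling of ties via ordered indices and the alternative CDF route.
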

\begin{proof}
We omit the full proof due to page limitation. The idea is to show that the expectation of the per-round time in \eqref{Tround} is 
 \begin{equation}
  \begin{array}{c}
\label{average_heter}
\Expect[t^{(r)}] =\frac{1}{C_{N}^{K}} \sum\nolimits_{i=K}^N{C_{i-1}^{K-1} t_{i}}.%
 \end{array}
\end{equation}
We first use the recursive property of  $C_{m}^{n}\!+C_{m}^{n-1}\!=\!C_{m+1}^{n}$ to show that the number of total combinations for choosing $K$ out of $N$ devices ${C_{N}^{K}}$ can be extended as
${C_{N}^{K}}=\sum\nolimits_{i=K}^N{C_{i-1}^{K-1}}$. %
Then, each combination (e.g., $C_{N-1}^{K-1}$) corresponds to the number of a certain device (e.g., $N$) being the slowest one (e.g., $t_N$). %
Since all devices are sampled uniformly at random, taking the expectation of all combinations gives \eqref{average_heter}. %
\end{proof}

\subsubsection{\textbf{Analytical
Relationship Between $\Expect[C_\textnormal{tot}]$ and Convergence}}
Based on %
$\Expect[e_\textnormal{tot}]$ in \eqref{het_energy} and $\Expect[{t_\textnormal{tot}}]$ in \eqref{avgtime}, %
the objective function $\Expect[{C_\textnormal{tot}}]$ in \textbf{P1}
can be expressed as
\begin{equation}
\label{obj_fun} 
\begin{array}{c}
\Expect[{C_\textnormal{tot}}]\!= \!  \left(\frac{\left(1-\!\gamma\right)\sum\nolimits_{i=K}^N{C_{i-1}^{K-1} t_{i}} }{C_{N}^{K}}\!+\!\gamma K\left( e_pE+ e_m\right)\right)R.%
\end{array}
\end{equation}
To connect %
$\Expect[{C_\textnormal{tot}}]$ with the $\epsilon$-convergence constraint in \eqref{ob1}, we utilize the %
convergence result \cite{li2019convergence}: %
\begin{equation}
    \label{upperbound1}
    \begin{array}{c}
    \Expect[F(\mathbf{w}_R)]-F^{*}\le \frac{1}{ER}\left(A_0+B_0\left(1+
 \frac{N-K}{K(N-1)}\right) E^2\right),
 \end{array}
\end{equation} 
where $A_0$ and 
$B_0$ 
are loss function related constants characterizing the statistical  heterogeneity of non-i.i.d. data.
By letting the upper bound satisfy the convergence constraint,\footnote{We note that optimization using upper bound as an approximation has also been adopted in \cite{wang2019adaptive} and resource allocation based literature \cite{tran2019federated, chen2020convergence,yang2019energy}. Although the convergence bound is valid for strongly convex problems, our experiments demonstrate that the proposed method also works well for non-convex learning problems empirically.} and using \eqref{obj_fun} and Lemmas~\ref{lemma:expect_e_tot} and \ref{lemma:expect_t_tot}, we approximate \textbf{P1} as
\begin{equation}\begin{array}{cl}
\label{ob2}
\!\!\!\!\!\!\textbf{P2:} \  \min_{E, K, R} & \!\!\! \left(\frac{\left(1-\!\gamma\right)\sum\nolimits_{i=K}^N{C_{i-1}^{K-1} t_{i}} }{C_{N}^{K}}\!+\!\gamma K\left( e_pE+ e_m\right)\right) R \\
\quad \text { s.t. } &\!\!\! \frac{1}{ER}\left(A_0+B_0\left(1+
 \frac{N-K}{K(N-1)}\right) E^2\right) \le \epsilon\\%,\\
& K,E,R \in \mathbb{Z}^{+}, \  \text{and}\ \  1 \le K \le N.
\end{array}\end{equation}
Combining with \eqref{upperbound1}, we can see that \textbf{P2} is more constrained than \textbf{P1}, i.e., any feasible solution  of \textbf{P2} is also feasible for~\textbf{P1}. 

Problem \textbf{P2}, however, is still hard to optimize because it requires to compute various combinatorial numbers with respect to $K$. %
Moreover, %
even for a fixed value of $K$,
the combinatorial term %
is based on the reordering of $t_k$ in \eqref{Troundi}, which is uncertain as the order of $t_k$ changes with $E$. %
For analytical tractability, we further approximate \textbf{P2} as follows. %
\subsubsection{\textbf{Approximate Optimization Problem of \textbf{P2}}}
To address the complexity involved with computing the combinatorial term in \eqref{obj_fun}, similar to how we derive  \eqref{het_energy}, we define an approximation of $\Expect[t_\textnormal{tot}]$ as 
\begin{equation}
    \label{appro_t_tot}
    \tilde{\Expect}[t_\textnormal{tot}(E,R)] :=\left(t_pE+t_m\right)R,%
\end{equation}
where $t_p\!:=\!\frac{\sum_{k=1}^{N} t_{k,p}}{N}$ and $t_m\!:=\!\frac{\sum_{k=1}^{N} t_{k,m}}{N}$ are the average per-device time cost for one local iteration and one round of communication, respectively.
The approximation $\tilde{\Expect}[t_\textnormal{tot}]$ is equivalent to $\Expect[t_\textnormal{tot}]$ in the following two cases.

\emph{Case 1}: For homogeneous systems, where $t_{p}=t_{k,p}$ and $t_m= t_{k,m}, \forall k \in \{1, \ldots, N\}$,  we have %
\begin{equation}
\notag
\label{avgtimehomo}
\begin{array}{cl}
\Expect[{t_\textnormal{tot}}(K,E,R)]&=\left(t_pE+t_m\right)\frac{\sum\nolimits_{i=K}^N{C_{i-1}^{K-1} } }{C_{N}^{K}}%
R\\
&=\left(t_p E+t_m\right)R\\ &=\tilde{\Expect}[t_\textnormal{tot}(E,R)].
\end{array}
\end{equation}

\emph{Case 2}: For heterogeneous systems with
 $K\!=\!1$, we have %
\begin{equation}
\notag
\label{avgtimek=1}
\begin{array}{cl}
\Expect[{t_\textnormal{tot}}(K=1,E,R)]&=\left( \frac{t_{1}+ t_{2}\ldots+ t_{N}}{N}\right)R\\
&=\left( \frac{\sum_{k=1}^{N} t_{k,p}E+\sum_{k=1}^{N} t_{k,m}}{N}\right)R\\
&=\left(t_p E+t_m\right) R\\
&=\tilde{\Expect}[t_\textnormal{tot}(E,R)].
\end{array}
\end{equation}

Based on the approximation $\tilde{\Expect}[t_\textnormal{tot}(E,R)]$ in \eqref{appro_t_tot}, we formulate an approximate objective function of \textbf{P2} as
\begin{equation}
\label{objfun3}
\tilde{\Expect}[C_\textnormal{tot}(K,E,R)]=\left(1-\gamma \right)\tilde{\Expect}[t_\textnormal{tot}(E,R)]+\gamma \Expect[e_\textnormal{tot}(K,E,R)].
\end{equation}

Now, we relax $K$, $E$ and $R$ as continuous variables for theoretical analysis, which are rounded back to integer variables later. For the relaxed problem, if any feasible solution $E^\prime, K^\prime$, and $R^\prime$ satisfies the $\epsilon$-constraint in \textbf{P2} with inequality, we can always decrease this $R^\prime$ to some $R^{\prime\prime} < R^\prime$ which satisfies the constraint with equality but reduces the objective function value. Hence, for optimal $R$, the $\epsilon$-constraint is always satisfied with equality, and we can obtain $R$ from this equality as 
\begin{equation}
\begin{array}{c}
R = \frac{1}{\epsilon E}\left(A_0+B_0\left(1+
 \frac{N-K}{K(N-1)}\right) E^2\right).
\end{array}
 \label{eq:R_solution}
\end{equation}
By using $\tilde{\Expect}[C_\textnormal{tot}]$ to approximate $\Expect[C_\textnormal{tot}]$ and substituting \eqref{eq:R_solution} into its expression, we obtain %
\begin{equation}
\label{ob3}
\begin{array}{cl}
\!\!\!\!\!\!\!\!\!\!\!\!\!\!\!\textbf{P3:} \\
\min_{E, K}  & \!\!\!\!\!\frac{\left(\left(1-\gamma\right)\left(t_pE+t_m\right)+\gamma K \left(e_p E+e_m\right)\right) \cdot \left(\!A_0+B_0\left(\! 1+
 \frac{N-K}{K(N-1)}\!\right) E^2\right)}{\epsilon E}  \\[6pt]
\!\!\!\!\!\quad\text {s.t.} &  \!\!\! {E}\ge 1, \  \text{and}\ \  1 \le K \le N, %
\end{array}
\end{equation}
where we note that the objective function of \textbf{P3} is equal to $\tilde{\Expect}[C_\textnormal{tot}]$.
\textbf{P3} is an approximation of \textbf{P2} due to the use of $\tilde{\Expect}[C_\textnormal{tot}]$ to approximate the original objective $\Expect[C_\textnormal{tot}]$.

In the following, we solve \textbf{P3} as an approximation of the original \textbf{P1}. 
Our empirical results in Section~\ref{sec:experimentation} demonstrate that the solution obtained from solving \textbf{P3} %
achieves \emph{near-optimal performance} of the original problem \textbf{P1}. %
For ease of analysis, we incorporate $\epsilon$ in the constants $A_0$ and $B_0$ next.

\subsection{Solving the Approximate Optimization Problem  \textbf{P3}}
In this subsection, we first characterize some properties of the optimization problem \textbf{P3}. Then, we propose a sampling-based algorithm to learn the problem-related unknown parameters $A_0$ and $B_0$, based on which the  solution $K^*$ and $E^*$ (of \textbf{P3}) can be efficiently computed. The overall algorithm for obtaining $K^*$ and $E^*$ is given in Algorithm~\ref{alg:optimalSolution}.

\begin{algorithm}[t]
\small	\caption{Cost-effective design of $K$ and $E$}
\label{alg:optimalSolution}
	\KwIn{{$N$, $\gamma$, $t_p$, $t_m$, $e_p$, $e_m$, {loss $F_a$ and $F_b$,  $\mathbf{w}_0$, number of sampled pairs $M$}}, stopping condition $\epsilon_0$}
	\KwOut{{$K^*$ and $E^*$}}

\For{$i=1,2, \ldots, M$ \label{alg:optimalSolution:startEstimation}}{
Empirically choose $(K_i$, $E_i)$ and run Algorithm 1;

Record $R_{i,a}$ and $R_{i,b}$ when $F_a$ and $F_b$ are reached;
}

Calculate average $\frac{A_0}{B_0}$ using \eqref{A0B05}; \label{alg:optimalSolution:endEstimation}

Choose a feasible %starting point 
$z_0 \leftarrow \left(K_0, E_0\right)$ and set $j \leftarrow 0$; \label{alg:optimalSolution:startOptimization}

\While{$\Vert z_{j} - z_{j-1}\Vert > \epsilon_0$}{ %
Substitute $E_{j}$, $\frac{A_0}{B_0}$, $N$, $\gamma$, $t_p$, $t_m$, $e_p$, $e_m$ into  \eqref{opt_K} and derive $K^\prime$;

$K_{j+1} \leftarrow \arg\min_{K \in [1, N]} |K - K'|$; \label{alg:optimalSolution:projectionK}

Substitute $K_{j+1}$,  $\frac{A_0}{B_0}$, $N$, $\gamma$, $t_p$, $t_m$, $e_p$, $e_m$ into \eqref{partial_E} and derive $E^\prime$;

$E_{j+1} \leftarrow \arg\min_{E \geq 1} |E - E'|$; \label{alg:optimalSolution:projectionE}

$z_{j+1} \leftarrow \left(K_{j+1}, E_{j+1}\right)$ and $j \leftarrow j+1$;
}

Substitute four rounding combinations of $\left(\left \lceil{K_j}\right \rceil, \left \lceil{E_j}\right \rceil \right)$, $\left(\left \lceil{K_j}\right \rceil, \left \lfloor{E_j}\right \rfloor \right)$, $\left(\left \lfloor{K_j}\right \rfloor, \left \lceil{E_j}\right \rceil \right)$, and $\left(\left \lfloor{K_j}\right \rfloor, \left \lfloor{E_j}\right \rfloor \right)$ into the objective function of \textbf{P3}, and set the pair with the minimum value as $\left(K^*, E^*\right)$ \label{alg:optimalSolution:rounding}

\Return $\left(K^*, E^*\right)$%
\label{alg:optimalSolution:endOptimization}

\end{algorithm}

\subsubsection{\textbf{Characterizing \textbf{P3}}} 
The objective function of \textbf{P3} is non-convex because the determinant of its Hessian $\frac{\partial^2 \tilde{\Expect}[C_\textnormal{tot}]}{\partial^2 K} \frac{\partial^2 \tilde{\Expect}[C_\textnormal{tot}]}{\partial^2 E}-(\frac{\partial^2 \tilde{\Expect}[C_\textnormal{tot}]}{\partial K\partial E})^2$ is not always non-negative in the feasible set. However, the problem is \emph{biconvex}  \cite{gorski2007biconvex}.

\newtheorem{theorem}{{Theorem}}
\begin{theorem}
\label{theorem:biconvex}
Problem \textbf{P3} %
is strictly biconvex. %
\end{theorem}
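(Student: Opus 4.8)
\emph{Proof proposal.} The plan is to verify the two defining ingredients of strict biconvexity separately. The feasible set of \textbf{P3} is the box $\{(K,E): 1\le K\le N,\ E\ge 1\}$, which is convex and whose $K$-sections and $E$-sections are intervals, so it is biconvex; it then remains to show that the objective $\tilde{\Expect}[C_\textnormal{tot}]=ab/(\epsilon E)$, where $a:=(1-\gamma)(t_pE+t_m)+\gamma K(e_pE+e_m)$ and $b:=A_0+B_0\bigl(1+\tfrac{N-K}{K(N-1)}\bigr)E^2$, is strictly convex in $E$ for each fixed $K$ and strictly convex in $K$ for each fixed $E$. In both cases I would do this by expanding the product and writing the result as a strictly-positive combination of elementary convex monomials in the relevant variable.

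First I would record the algebraic reduction
\begin{equation*}
1+\frac{N-K}{K(N-1)}=\frac{N-2}{N-1}+\frac{N}{(N-1)K},
\end{equation*}
valid for $N\ge 2$ (the case $N=1$ being trivial), so that $b=b_0+b_1/K$ with $b_0=A_0+\tfrac{N-2}{N-1}B_0E^2\ge 0$ and $b_1=\tfrac{N}{N-1}B_0E^2>0$; note also that $a$ is affine in $E$ and in $K$ with nonnegative coefficients.

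For fixed $K$, write $a=\alpha E+\beta$ with $\alpha=(1-\gamma)t_p+\gamma K e_p>0$, $\beta=(1-\gamma)t_m+\gamma K e_m>0$, and $b=A_0+cE^2$ with $c=B_0\bigl(1+\tfrac{N-K}{K(N-1)}\bigr)>0$; then
\begin{equation*}
\tilde{\Expect}[C_\textnormal{tot}]=\tfrac1\epsilon\bigl(\alpha c\,E^2+\beta c\,E+\alpha A_0+\beta A_0\,E^{-1}\bigr),
\end{equation*}
whose second $E$-derivative $\tfrac2\epsilon(\alpha c+\beta A_0 E^{-3})$ is positive on $E\ge 1$. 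For fixed $E$, write $a=u+vK$ with $u=(1-\gamma)(t_pE+t_m)$, $v=\gamma(e_pE+e_m)$; then
\begin{equation*}
\tilde{\Expect}[C_\textnormal{tot}]=\tfrac{1}{\epsilon E}\bigl(vb_0\,K+ub_1\,K^{-1}+ub_0+vb_1\bigr),
\end{equation*}
whose second $K$-derivative is $2ub_1/(\epsilon E K^3)>0$ as long as $u>0$, i.e.\ $\gamma<1$. Combining the two, \textbf{P3} is strictly biconvex; for the boundary value $\gamma=1$ the objective is merely affine (hence still convex) in $K$, so strictness in the $K$-direction should be stated under the standing assumption $\gamma\in[0,1)$.

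The argument is essentially a computation, so I do not expect a genuine obstacle; the only points needing care are the reduction of the heterogeneity term $\tfrac{N-K}{K(N-1)}$ to the form ``constant plus constant$/K$'' (which is what makes each $K$-section a convex function of the single monomial $1/K$), the verification that every coefficient above is strictly positive using $t_p,t_m,e_p,e_m,A_0,B_0>0$, and the $\gamma=1$ edge case just noted.
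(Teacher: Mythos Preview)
Your proposal is correct and follows essentially the same approach as the paper: both arguments verify strict biconvexity by computing the second partial derivatives $\partial^2\tilde{\Expect}[C_\textnormal{tot}]/\partial K^2$ and $\partial^2\tilde{\Expect}[C_\textnormal{tot}]/\partial E^2$ and checking positivity, with your monomial decomposition yielding exactly the same expressions the paper obtains directly. Your treatment is in fact slightly more careful than the paper's, since you correctly flag that at $\gamma=1$ the $K$-section is merely affine (the paper's displayed formula $\partial^2\tilde{\Expect}[C_\textnormal{tot}]/\partial K^2=\tfrac{2(1-\gamma)B_0N(t_pE^2+t_mE)}{(N-1)K^3}$ vanishes there), so strict convexity in $K$ indeed requires $\gamma<1$.
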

\begin{proof}
For any $E \ge 1$, we have
{\small
 \begin{equation*}
\frac{\partial^2 \tilde{\Expect}[C_\textnormal{tot}]}{\partial^2 K}=\frac{2(1-\gamma)B_{0} N\left(t_{p} E^{2}+t_{m} E\right)}{(N-1) K^{3}}>0.\end{equation*} }
Similarly, for any $1\le K \le N$, we have
{\small
\begin{equation*}
\begin{array}{cl}
\dfrac{\partial^2 \tilde{\Expect}[C_\textnormal{tot}]}{\partial^2 E}\!\!\!\!\!\!&=2 \left( \left(1\!-\!\gamma\right)t_p+\gamma K e_p\right) B_{0}\left(1+\frac{N-K}{K(N-1)}\right)\\
&\ \ +
\dfrac{2A_{0}\left[ (1\!-\!\gamma)t_m+\gamma Ke_m\right] }{E^{3}}>0
\end{array}
\end{equation*}}%
Since the domain of $K$ and $E$ is convex as well,  we conclude that \textbf{P3} is strictly biconvex.
\end{proof}
The  biconvex property allows many efficient algorithms, such as \emph{Alternate Convex Search} (ACS) approach, to a achieve a guaranteed local optima\cite{gorski2007biconvex}. %
Nevertheless,  %
by analyzing the \emph{stationary point} of $\frac{\partial \tilde{\Expect}[C_\textnormal{tot}]}{\partial K}=0$
and $\frac{\partial \tilde{\Expect}[C_\textnormal{tot}]}{\partial E}\!=\!0$, we show that the optimal solution can be found %
more efficiently. 
This is because from $\frac{\partial \tilde{\Expect}[C_\textnormal{tot}]}{\partial K}\!=\!0$ %
we have $K$ in closed-form of $E$ as \begin{equation}
\label{opt_K}
\begin{array}{cl}
K= \sqrt{\frac{(1-\gamma)B_0N\left(t_{p}  E^3+ t_{m}E^2\right)}{\gamma\left[B_0(N-2)E^2+A_0(N-1)\right]
(e_{p}E+{ e_{m}})}}.
\end{array}
\end{equation}
By letting $\frac{\partial \tilde{\Expect}[C_\textnormal{tot}]}{\partial E}=0$, we derive the \emph{cubic equation} of $E$ as
\begin{equation}
\label{partial_E}
\begin{array}{cl}
\frac{2\left(1-\gamma\right)t_p+\gamma K e_p}{ 2\left(1-\gamma\right)t_{m}+\gamma Ke_m}  E^3
 +E^2 -{\frac{A_{0}}{B_0\left(1+\frac{N-K}{K(N-1)}\right)} }\!=\!0,
\end{array}
\end{equation}
which can be %
analytically solved in closed-form of $K$ via \emph{Cardano formula} \cite{schlote2005bl}.  
Therefore, for any fixed value of $K$,  due to biconvexity (Theorem~\ref{theorem:biconvex}), we have a unique real solution of $E$ from \eqref{partial_E} in closed form. Then, with ACS method we iteratively calculate \eqref{opt_K} and \eqref{partial_E} which keeps decreasing the objective function until we achieve the converged $K^\ast$ and $E^\ast$. This optimization process corresponds to Lines~\ref{alg:optimalSolution:startOptimization}--\ref{alg:optimalSolution:endOptimization} of Algorithm~\ref{alg:optimalSolution}, where Lines~\ref{alg:optimalSolution:projectionK} and \ref{alg:optimalSolution:projectionE} ensure that the solution is taken within the feasibility region, and Line~\ref{alg:optimalSolution:rounding} rounds the continuous values of $K$ and $E$ to integer values.

\subsubsection{\textbf{Estimation of Parameters $\frac{A_0}{B_0}$}}

Equations \eqref{opt_K} and \eqref{partial_E} include unknown parameters $A_0$ and $B_0$, which can only be determined during the learning process.\footnote{We assume that %
$t_p$, $t_m$, $t_m$ and $e_m$ can be measured offline.} In fact, $K$ in \eqref{opt_K} and $E$ in \eqref{partial_E} only depend on the value of $\frac{A_0}{B_0}$. %
In the following, we propose a sampling-based algorithm to estimate $\frac{A_0}{B_0}$, and show that the overhead for estimation is marginal. 

The basic idea is to sample different combinations of $\left(K, E\right)$ and use the upper bound in \eqref{upperbound1} to approximate $F(\mathbf{w}_R)\!-\!F^{*}$. 
{Specifically, we empirically sample\footnote{Our sampling criteria is to cover diverse combinations of  $\left(K,E\right)$.} a pair $\left(K_i, E_i\right)$ and run Algorithm 1 with an initial model $\mathbf{w}_0\!=\!\mathbf{0}$ until it reaches two pre-defined global losses $F_a := F(\mathbf{w}_{R_{i,a}})$ and $F_b := F(\mathbf{w}_{R_{i,b}})$ ($F_b<F_a$), where $R_{i,a}$ and ${R_{i,b}}$ are the executed round numbers for reaching losses $F_a$ and $F_b$. The pre-defined losses $F_a$ and $F_b$ can be set to a relatively high value, to keep a small estimation overhead, but they cannot be too high either as it would cause low estimation accuracy.} %
Then, we have%
\begin{equation}
\begin{cases}
    \label{A0B01}
    R_{i,a} \approx d + \frac{A_0 + B_0\left(1+
 \frac{N-K_i}{K_i(N-1)}\right) E_i^2}{E_i\left(F_a-F^{*}\right)},\\
    {R_{i,b}} \approx d + \frac{A_0 + B_0\left(1+
 \frac{N-K_i}{K_i(N-1)}\right) E_i^2}{E_i\left(F_b-F^{*}\right)}.
 \end{cases}
\end{equation}
from \eqref{upperbound1}, where $d$ captures a constant error of using the upper bound to approximate $F(\mathbf{w}_R)\!-\!F^{*}$. %
Based on \eqref{A0B01}, we have
 \begin{equation}
 \begin{array}{c}
    \label{A0B03}
    {R_{i,b}}-R_{i,a} \approx %
  \frac{\Delta}{E_i}  \left(    {A_0\! +\! B_0\left(1\!+\!
 \frac{N-K_i}{K_i(N\!-\!1)}\right)\! E_i^2}\right),
\end{array}
\end{equation}
where $\Delta :=\frac{1}{F_b\!-\!F^{*}}\!-\!\frac{1}{F_a\!-\!F^{*}}$. Similarly, sampling another pair of ($K_j$, $E_j$) and performing the above process gives us another executed round numbers $R_{j,a}$ and $R_{j,b}$. %
Thus, we have
 \begin{equation}
  \begin{array}{c}
    \label{A0B05}
    \frac{E_i\left(R_{i,b} -R_{i,a}\right)}{E_j\left(R_{j,b}-R_{j,a}\right)} \approx %
    \frac{ {A_0 + B_0\left(1+
 \frac{N-K_i}{K_i(N-1)}\right) E_i^2}}{{A_0 + B_0\left(1+
 \frac{N-K_j}{K_j(N-1)}\right) E_j^2}}.
  \end{array}
\end{equation}
We can obtain $\frac{A_0}{B_0}$ from \eqref{A0B05} (note that the variables except for $\frac{A_0}{B_0}$ are known).
In practice, %
we may sample several different pairs of $\left(K_i, E_i\right)$ %
to obtain an averaged estimation of $\frac{A_0}{B_0}$. This estimation process is given in Lines~\ref{alg:optimalSolution:startEstimation}--\ref{alg:optimalSolution:endEstimation} of Algorithm~\ref{alg:optimalSolution}.

\textbf{Estimation overhead}:
The main overhead for estimation comes from the additional iterations for the estimation of $\frac{A_0}{B_0}$.

For $M$ sampling pairs, the total number of iterations used for estimation is $\sum_{i=1}^MR_{i,b}E_i$, where $R_{i,b}$ is the number of rounds for sampling pair ($K_i$, $E_i$) to reach $F_b$. 
If the target loss is $F_R$ with the required number of rounds $R$, then according to \eqref{upperbound1}, {the overhead ratio can be written as
\begin{equation}
 \begin{array}{c}
    \label{A0B06}
    \frac{\sum_{i=1}^MR_{i,b}E_i}{RE^*} \approx  \frac{\sum_{i=1}^MR_{i,b}E_i\left({F_R-F^\ast}\right)}{A_0 + B_0\left(1+
 \frac{N-K^\ast}{K^\ast(N-1)}\right) \cdot ({E^\ast})^2},
  \end{array}
\end{equation}}%
where $K^*$ and $E^*$ are obtained from Algorithm 2. %
For a high precision with $F_R-F^{*}\rightarrow 0$, the overhead ratio is marginal.

\section{Solution Property for Cost Minimization}
\label{sec:property}

 We theoretically analyze the solution properties %
 for different metric preferences, which not only provide insightful design principles but also give alternative ways of solving \textbf{P3} to %
 more efficiently.  Our empirical results %
 show that these properties derived for \textbf{P3} \emph{are still valid for the original} \textbf{P1}. %
 In the following, we discuss the properties for $\gamma=0$ and $\gamma=1$, respectively. %
 For ease of presentation, we consider continuous $K$, $K^*$, $E$, and $E^*$ (i.e., before rounding) in this section. 
 
\subsection{Properties for Minimizing $\tilde{\Expect}[C_\textnormal{tot}]$ %
when $\gamma=0$}
When the design goal is to minimize learning time ($\gamma=0$),  %
the objective of \textbf{P3} can be rewritten as
\begin{equation}
 \begin{array}{c}
\label{ob4}
\min_{E, K} \  \left(\frac{t_m}{E}+t_p\right) \cdot \left(A_0+B_0\left(1+\frac{N-K}{K(N-1)}\right) E^2\right) 
 \end{array}
\end{equation}
We present the following insightful results to characterize the properties of $E^*$ and $K^*$. %
\begin{theorem}
When $\gamma=0$, $\tilde{\Expect}[C_\textnormal{tot}]$ is a strictly decreasing %
function in $K$ for any given $E$, hence $K^* = N$.
\end{theorem}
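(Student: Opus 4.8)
The plan is to show directly that the objective of \textbf{P3} at $\gamma=0$, namely
\begin{equation*}
g(K,E) = \left(\frac{t_m}{E}+t_p\right)\left(A_0 + B_0\left(1+\frac{N-K}{K(N-1)}\right)E^2\right),
\end{equation*}
is strictly decreasing in $K$ on $[1,N]$ for every fixed $E \ge 1$. Since the first factor $\bigl(t_m/E + t_p\bigr)$ does not depend on $K$ and is strictly positive, it suffices to analyze the second factor. First I would isolate the only $K$-dependent piece, which sits inside the term $\frac{N-K}{K(N-1)} = \frac{1}{N-1}\left(\frac{N}{K}-1\right)$. Differentiating with respect to $K$ gives a contribution proportional to $-\frac{N}{K^2}$, which is strictly negative for all $K$ in the feasible range because $N \ge 1$ and $K > 0$. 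Multiplying back by the positive constants $B_0 E^2$ (note $B_0 > 0$ and $E^2 > 0$) and by the positive first factor, we conclude $\frac{\partial g}{\partial K} < 0$ throughout $[1,N]$.

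Concretely, the key steps in order are: (i) write $g(K,E)$ as (positive $K$-free factor) $\times$ (affine-in-$\tfrac1K$ factor); (ii) compute $\frac{\partial g}{\partial K} = \left(\frac{t_m}{E}+t_p\right)\cdot B_0 E^2 \cdot \frac{\partial}{\partial K}\!\left(\frac{N-K}{K(N-1)}\right) = -\left(\frac{t_m}{E}+t_p\right)\cdot \frac{B_0 N E^2}{(N-1)K^2}$; (iii) observe every factor in this last expression is strictly positive (using $t_p, t_m, E, B_0 > 0$ and $1 \le K \le N$, so $K^2 > 0$ and $N - 1 > 0$ when $N \ge 2$), hence the derivative is strictly negative; (iv) since a continuously differentiable function with everywhere-negative derivative on an interval is strictly decreasing, $g(\cdot,E)$ is strictly decreasing on $[1,N]$; (v) therefore its minimum over the feasible set $1 \le K \le N$ is attained at the right endpoint $K = N$, and since this holds for every fixed $E$, the jointly optimal $K^*$ equals $N$. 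I would also remark that the degenerate case $N=1$ is trivial (the feasible set is the single point $K=1=N$), so the claim holds vacuously there.

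I do not anticipate a genuine obstacle here — the statement is essentially a monotonicity computation. The only mild subtlety worth a sentence is making explicit that the conclusion $K^* = N$ for the \emph{joint} minimization follows because the monotonicity in $K$ is uniform over $E$: for any candidate optimal pair $(K', E')$ with $K' < N$, the pair $(N, E')$ achieves a strictly smaller objective, so no interior-in-$K$ point can be optimal. This reduces the original two-variable problem to a one-dimensional problem in $E$ alone, which is the practically useful takeaway and presumably sets up the subsequent analysis of $E^*$ for the $\gamma = 0$ regime. One should also note consistency with Theorem~\ref{theorem:biconvex}: biconvexity guarantees $g(\cdot, E)$ is convex in $K$, and a convex function that is also strictly decreasing on a compact interval attains its minimum at the right endpoint, which is exactly $K = N$.
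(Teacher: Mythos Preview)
Your proposal is correct and takes essentially the same approach as the paper: compute $\frac{\partial \tilde{\Expect}[C_\textnormal{tot}]}{\partial K}$, observe it is strictly negative for every feasible $E$, and conclude $K^*=N$ since $K\le N$. The paper's proof is a one-line sketch of exactly this argument (it also notes the same conclusion follows by letting $\gamma\to 0$ in the closed-form expression for $K$), so your write-up is simply a more detailed version of what the authors intended.
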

\begin{proof}
The proof is straightforward, as we are able to show for any $E$, $\frac{\partial\tilde{\Expect}[C_\textnormal{tot}]}{\partial K}<0$. %
Since $K \le N$, we have $K^*=N$.
The same result can also be obtained by letting $\gamma \rightarrow 0$ in \eqref{opt_K}. 
\end{proof}

\textbf{Remark:} In practical FL applications, $N$ can be very large, and thus, full participation ($K\!=\!N$) is usually intractable. %
However, 
since the objective function in \eqref{ob4} %
is strictly convex and decreasing with $K$, as $K$ increases, the marginal learning time decrease becomes smaller as well. %
Therefore, when $N$ is very large, 
sampling a small portion of devices %
can achieve a relative  good learning time. %
Our later real-data experiment %
shows that sampling $K\!=\!20$ out of $N\!=\!100$ devices achieves a similar performance as sampling all devices. %

Based on the above finding, it is important to analyze the property of $E$ when $K$ is chosen sub-optimally. In line with this, we present the following two corollaries.

\newtheorem{corollary}{Corollary}
\begin{corollary}
\label{corollary:t_tot_E1}
When $\gamma=0$, for any fixed value of $K$, as $E$ increases, $\tilde{\Expect}[C_\textnormal{tot}]$ first decreases and then increases.%
\end{corollary}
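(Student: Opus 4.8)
The plan is to analyze the objective in \eqref{ob4} as a function of $E$ alone, treating $K$ (and hence the coefficient $1 + \frac{N-K}{K(N-1)}$) as a fixed positive constant. First I would write the objective as $g(E) := \left(\frac{t_m}{E} + t_p\right)\left(A_0 + \beta E^2\right)$, where $\beta := B_0\left(1 + \frac{N-K}{K(N-1)}\right) > 0$ is constant for fixed $K$. Expanding, $g(E) = t_p\beta E^2 + t_m\beta E + t_p A_0 + \frac{t_m A_0}{E}$, a sum of a convex increasing-on-$(0,\infty)$ quadratic-type piece and the convex decreasing term $\frac{t_m A_0}{E}$.

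Next I would compute $g'(E) = 2 t_p \beta E + t_m \beta - \frac{t_m A_0}{E^2}$ and observe that, since each term in the objective is convex in $E$ on $E \ge 1$ (indeed $g''(E) = 2 t_p \beta + \frac{2 t_m A_0}{E^3} > 0$), $g$ is strictly convex on the feasible interval; this is exactly the $\gamma = 0$ specialization of the $\frac{\partial^2 \tilde{\Expect}[C_\textnormal{tot}]}{\partial^2 E} > 0$ computation already done in the proof of Theorem~\ref{theorem:biconvex}, so I can simply cite that. A strictly convex function of one variable is "first decreasing then increasing" precisely when its derivative changes sign from negative to positive, i.e. when it has an interior stationary point. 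So the remaining task is to show $g'(E) = 0$ has a root in the relevant range — equivalently, to exhibit the unimodal shape. Multiplying $g'(E)$ by $E^2$ gives $h(E) := 2 t_p \beta E^3 + t_m \beta E^2 - t_m A_0$, which is strictly increasing in $E > 0$ with $h(E) \to -t_m A_0 < 0$ as $E \to 0^+$ and $h(E) \to +\infty$ as $E \to \infty$; hence $h$ (and therefore $g'$) has a unique positive root $E^\dagger$, with $g' < 0$ for $E < E^\dagger$ and $g' > 0$ for $E > E^\dagger$. This is the content of the cubic equation \eqref{partial_E} specialized to $\gamma = 0$, so I would connect it to that.

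The only subtlety — and the main thing requiring care rather than difficulty — is the feasibility constraint $E \ge 1$. Strictly speaking, the "first decreases then increases" behavior over the feasible set $[1,\infty)$ holds verbatim only if the stationary point $E^\dagger \ge 1$; if $E^\dagger < 1$, then $g$ is monotonically increasing on $[1,\infty)$ and the "first decreases" phase is degenerate. I would either state the corollary's conclusion as describing the behavior of $g$ on $(0,\infty)$ (consistent with the section's stated convention of treating $K, E$ as continuous and unconstrained-by-integrality), or add a one-line remark that when $E^\dagger \le 1$ the decreasing phase is vacuous and $E^* = 1$. Given the paper's informal framing of this corollary as a design-principle statement ("neither too small nor too large $E$ is good"), I expect the intended proof is just: (i) $g$ is strictly convex in $E$ by the Theorem~\ref{theorem:biconvex} computation with $\gamma = 0$; (ii) $g(E) \to \infty$ as $E \to 0^+$ (from the $\frac{t_m A_0}{E}$ term) and as $E \to \infty$ (from the $t_p \beta E^2$ term); (iii) a strictly convex function with $+\infty$ limits at both ends of $(0,\infty)$ decreases then increases, with the unique minimizer given by the positive root of \eqref{partial_E}. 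I would present it in that order, flagging the $E \ge 1$ boundary case in a footnote or parenthetical rather than belaboring it.
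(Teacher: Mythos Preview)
Your proposal is correct and follows essentially the same approach as the paper: both compute the first derivative in $E$ and argue that it is negative for small $E$ and positive for large $E$, yielding the ``first decreases then increases'' shape. The paper's proof is a two-line sketch that simply writes out $\frac{\partial\tilde{\Expect}[C_\textnormal{tot}]}{\partial E}$ and observes the sign change; your version is more thorough (invoking strict convexity from Theorem~\ref{theorem:biconvex}, exhibiting the unique positive root via the cubic, and flagging the $E\ge 1$ boundary case), but the core idea is identical.
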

\begin{proof}
Taking the first order derivative of $\tilde{\Expect}[C_\textnormal{tot}]$ over $E$, 
\begin{equation}
 \begin{array}{c}
\label{partialE}
\frac{\partial\tilde{\Expect}[C_\textnormal{tot}]}{\partial E}=B_{0}\left( 2t_{p} E + t_{m} \right) 
\left(1+\frac{N-K}{K(N-1)}\right)-\frac{ t_{m} A_{0}}{E^{2}}.
 \end{array}
\end{equation}
Since $0 \leq \frac{N-K}{K(N-1)} \leq 1$ for any feasible $K$, \eqref{partialE} is negative when $E$ is small and positive when $E$ is large. 
\end{proof}
Corollary 1 shows that for any given $K$, $E$ should not be set too small nor too large for saving learning time.%

\begin{corollary}
\label{corollary:t_tot_E2}
When $\gamma=0$, for any fixed value of $K$, $E^*$ increases as $\frac{t_m}{t_p}$ increases. %
\end{corollary}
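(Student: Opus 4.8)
The plan is to treat $E^*$ as an implicitly defined function of the ratio $\rho := t_m/t_p$ through the first-order stationarity condition, and then sign its derivative. With $\gamma = 0$ and $K$ held fixed, write $\beta := B_0\bigl(1 + \frac{N-K}{K(N-1)}\bigr) > 0$. Multiplying the derivative in \eqref{partialE} by $E^2$ and dividing by $t_p > 0$, the condition $\frac{\partial \tilde{\Expect}[C_\textnormal{tot}]}{\partial E} = 0$ becomes
\[ H(E,\rho) \;:=\; 2\beta E^3 + \beta\rho E^2 - \rho A_0 \;=\; 0 . \]
By Corollary~\ref{corollary:t_tot_E1}, $\tilde{\Expect}[C_\textnormal{tot}]$ is decreasing-then-increasing in $E$, so this equation has a unique positive root, which is the interior minimizer $E^*(\rho)$; I would note that in the boundary case $E^* = 1$ the statement is vacuous and restrict attention to the interior case.

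Next I would invoke the implicit function theorem. Since $\partial H/\partial E = 6\beta E^2 + 2\beta\rho E = 2\beta E(3E + \rho) > 0$ for all $E,\rho > 0$, differentiating $H(E^*(\rho),\rho) = 0$ yields
\[ \frac{dE^*}{d\rho} \;=\; -\,\frac{\partial H/\partial\rho}{\partial H/\partial E} \;=\; \frac{A_0 - \beta (E^*)^2}{\,2\beta E^*(3E^* + \rho)\,}, \]
so the sign of $dE^*/d\rho$ is the sign of $A_0 - \beta(E^*)^2$.

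The only substantive step is to check $\beta(E^*)^2 < A_0$, and this falls out of the stationarity condition itself: rearranging $H(E^*,\rho) = 0$ gives $\beta(E^*)^2\,(2E^* + \rho) = \rho A_0$, whence $\beta(E^*)^2 = \rho A_0/(2E^* + \rho) < A_0$ because $E^* > 0$. Hence $dE^*/d\rho > 0$, i.e., $E^*$ is strictly increasing in $t_m/t_p$, as claimed. The main obstacle, such as it is, is recognizing that the inequality $\beta(E^*)^2 < A_0$ does not need a separate argument but is an immediate consequence of the first-order condition; once that is in hand, the rest is routine differentiation. (An alternative route would be to use the Cardano closed form for the root of $H$ in $E$ and differentiate it directly, but the implicit-function argument avoids that algebra entirely.)
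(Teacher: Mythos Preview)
Your argument is correct. The paper omits the proof of this corollary entirely (``We omit the proof of Corollary~\ref{corollary:t_tot_E2} due to page limitation''), so there is no paper proof to compare against; your implicit-function approach is clean and complete. One minor simplification: you do not even need to appeal to Corollary~\ref{corollary:t_tot_E1} for uniqueness of the positive root of $H(E,\rho)=0$, since $\partial H/\partial E = 2\beta E(3E+\rho)>0$ for $E>0$ together with $H(0,\rho)=-\rho A_0<0$ already gives it directly.
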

We omit the proof of Corollary~\ref{corollary:t_tot_E2} due to page limitation.
Intuitively, Corollary~\ref{corollary:t_tot_E2} says that for any given $K$, when $t_m$ increases or $t_p$ decreases,  %
the optimal strategy to reduce learning time is to perform more steps of iterations (i.e., increase $E$) before aggregation, which matches  the empirical observations for communication efficiency in \cite{mcmahan2017communication, yu2018parallel, stich2018local}.

\subsection{Properties for Minimizing $\tilde{\Expect}[C_\textnormal{tot}]$  %
when $\gamma=1$} 
When the design goal is to minimize energy consumption  %
($\gamma=1$), the objective of \textbf{P3} can be rewritten as
\begin{equation}
 \begin{array}{c}
\label{ob5}
\min_{E, K}  \left(\frac{e_mK}{E}\!+\!e_pK\right) %
\left(A_0\!+\!B_0\left(1\!+\frac{N-K}{K(N\!-\!1)}\right)\! E^2\right).
 \end{array}
\end{equation}
Besides the different metrics of $e_m$ and $e_p$, the key difference between \eqref{ob4} and \eqref{ob5} %
is the multiplication of $K$. Therefore, the main difference between $\gamma = 1$ and $\gamma=0$ is in the properties related to $K$, %
whereas the properties related to $E$ remain similar, which we show in the following.  %
\begin{theorem}
When $\gamma=1$, $\tilde{\Expect}[C_\textnormal{tot}]$ is a strictly increasing function in $K$ for any given $E$,  hence $K^* = 1$.
\end{theorem}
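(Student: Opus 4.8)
The plan is to mirror the structure of the proof of the $\gamma=0$ case (Theorem~3): show that the partial derivative $\frac{\partial \tilde{\Expect}[C_\textnormal{tot}]}{\partial K}$ is strictly positive throughout the feasible region $1\le K\le N$, so that $\tilde{\Expect}[C_\textnormal{tot}]$ is strictly increasing in $K$ for every fixed $E\ge 1$, and therefore the minimum over $K$ is attained at the left endpoint $K^*=1$. Setting $\gamma=1$ in the objective of \textbf{P3} (equivalently, in \eqref{ob5}), the cost as a function of $K$ with $E$ fixed has the shape $\left(e_p K + \frac{e_m K}{E}\right)\left(A_0 + B_0 E^2 + B_0 E^2 \frac{N-K}{K(N-1)}\right)$. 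I would expand this product: writing $\alpha := e_p + e_m/E > 0$ and $\beta := A_0 + B_0 E^2 > 0$ and $\delta := \frac{B_0 E^2}{N-1} > 0$, the objective becomes $\alpha K\left(\beta + \delta\frac{N-K}{K}\right) = \alpha\beta K + \alpha\delta(N-K) = \alpha(\beta-\delta)K + \alpha\delta N$.

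First I would differentiate: $\frac{\partial \tilde{\Expect}[C_\textnormal{tot}]}{\partial K} = \alpha(\beta - \delta)$. Substituting back, $\beta - \delta = A_0 + B_0 E^2 - \frac{B_0 E^2}{N-1} = A_0 + B_0 E^2\frac{N-2}{N-1}$. Since $A_0, B_0 > 0$, $E\ge 1$, and $N\ge 2$ (there must be at least two devices for federated learning to be meaningful; for $N=1$ the constraint forces $K=1$ trivially), this quantity is strictly positive, and $\alpha>0$, so the derivative is strictly positive on the whole feasible set. Hence for any fixed $E\ge 1$ the function is strictly increasing in $K$, and the constrained minimum over $K\in[1,N]$ is at $K^*=1$. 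I would also note the consistency check: letting $\gamma\to 1$ in the closed-form stationarity expression \eqref{opt_K}, the numerator factor $(1-\gamma)\to 0$ forces $K\to 0$, so the unconstrained stationary point lies below the feasible region and the projection onto $[1,N]$ yields $K^*=1$ — exactly paralleling the remark after Theorem~3.

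The only mild subtlety — and the closest thing to an obstacle — is the $K$ in the denominator of the term $\frac{N-K}{K(N-1)}$ inside \eqref{ob5}: one must be careful that after multiplying by the leading $K$ this denominator cancels cleanly, which is what makes the objective \emph{affine} in $K$ (not merely monotone). Once that cancellation is observed the argument is essentially a one-line derivative computation; there is no need to invoke biconvexity or second-order information here, since linearity already gives strict monotonicity. I would therefore present the proof compactly: expand \eqref{ob5} to the affine form $\alpha(\beta-\delta)K + \alpha\delta N$, observe $\alpha>0$ and $\beta-\delta = A_0 + B_0 E^2\frac{N-2}{N-1}>0$, conclude strict increase in $K$, and hence $K^*=1$.
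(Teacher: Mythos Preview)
Your proposal is correct and takes essentially the same approach as the paper: the paper's proof simply asserts $\frac{\partial \tilde{\Expect}[C_\textnormal{tot}]}{\partial K}>0$ for any given $E$ and concludes $K^*=1$, also noting consistency with the $\gamma=1$ limit of \eqref{opt_K}. Your explicit expansion revealing that the objective is \emph{affine} in $K$ with slope $\alpha(\beta-\delta)=\bigl(e_p+\tfrac{e_m}{E}\bigr)\bigl(A_0+B_0E^2\tfrac{N-2}{N-1}\bigr)>0$ supplies the computation the paper leaves to the reader (minor labeling slip: the $\gamma=0$ result you reference is Theorem~2, not Theorem~3).
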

\begin{proof}
It is easy to show that $\frac{\partial \tilde{\Expect}[C_\textnormal{tot}]}{\partial K}>0$ for any given $E$.
Since $K\ge 1$, we have $K^*=1$. This conclusion can also be obtained when we let $\gamma = 1$ in \eqref{opt_K} since $1\leq K \leq N$. %
\end{proof}
\textbf{Remark:}  Theorem 3 shows that sampling fewer devices can reduce the total energy consumption, whereas according to Theorem~2, this results in a longer learning time. While this may seem contradictory at the first glance, we note that this result is correct because the total energy is the sum energy consumption of all selected clients. Although it takes longer time to reach the desired precision $\epsilon$ with a smaller $K$, there are also less number of clients participating in each round, so the total energy consumption can be smaller.

\begin{corollary}
When $\gamma\!=\!1$, for any fixed value of $K$, as $E$ increases, $\tilde{\Expect}[C_\textnormal{tot}]$ first decreases and then increases.%
\end{corollary}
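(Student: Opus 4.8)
The plan is to mirror the proof of Corollary~\ref{corollary:t_tot_E1}, since the case $\gamma=1$ differs from $\gamma=0$ only by the common multiplicative factor $K$ (which is fixed here) and by the substitution $t_m \mapsto e_m$, $t_p \mapsto e_p$. Concretely, I would fix $K$ and compute the first-order derivative of the objective in \eqref{ob5} with respect to $E$. Writing the objective as $\left(\frac{e_m K}{E}+e_p K\right)\left(A_0+B_0\left(1+\frac{N-K}{K(N-1)}\right)E^2\right)$, differentiation gives
\begin{equation*}
\frac{\partial \tilde{\Expect}[C_\textnormal{tot}]}{\partial E} = K B_0\left(2e_p E + e_m\right)\left(1+\tfrac{N-K}{K(N-1)}\right) - \frac{e_m K A_0}{E^2}.
\end{equation*}

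Next I would argue the sign change. Since $K\ge 1$ and $0\le \frac{N-K}{K(N-1)}\le 1$ for every feasible $K$, the bracketed factor $\left(1+\frac{N-K}{K(N-1)}\right)$ lies in $[1,2]$ and is strictly positive. Hence the first term is positive and increasing in $E$ (growing without bound), while the second term $\frac{e_m K A_0}{E^2}$ is positive and strictly decreasing to $0$ as $E\to\infty$ and blows up as $E\to 0^+$. Therefore $\frac{\partial \tilde{\Expect}[C_\textnormal{tot}]}{\partial E}<0$ for small $E$ and $>0$ for large $E$; by strict biconvexity (Theorem~\ref{theorem:biconvex}, which guarantees $\tilde{\Expect}[C_\textnormal{tot}]$ is strictly convex in $E$ for fixed $K$) the derivative crosses zero exactly once, so $\tilde{\Expect}[C_\textnormal{tot}]$ first decreases and then increases in $E$. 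This also identifies the unique minimizer as the root of \eqref{partial_E} specialized to $\gamma=1$.

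I do not anticipate a substantive obstacle: the only thing to be careful about is confirming that the coefficient multiplying $E^2$ inside the convergence term, namely $B_0\left(1+\frac{N-K}{K(N-1)}\right)$, is strictly positive and bounded for all feasible $K\in[1,N]$ — which it is, since $A_0,B_0>0$ and the heterogeneity fraction is in $[0,1]$ — so that the positive/negative dichotomy in the derivative is genuine rather than degenerate. Given that, the argument is a one-line derivative computation plus a monotonicity observation, exactly parallel to Corollary~\ref{corollary:t_tot_E1}, and for brevity one could simply note ``the proof is identical to that of Corollary~\ref{corollary:t_tot_E1} with $t_m, t_p$ replaced by $e_m, e_p$ and an extra constant factor $K>0$, which does not affect the sign analysis.''
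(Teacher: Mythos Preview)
Your proposal is correct and follows precisely the approach the paper indicates: the paper explicitly states that the proof of this corollary is similar to that of Corollary~\ref{corollary:t_tot_E1} and omits it, and your argument is exactly that---the same derivative computation with $t_p,t_m$ replaced by $e_p,e_m$ and an additional positive factor $K$ that does not affect the sign analysis.
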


\begin{corollary}
\label{corollary:e_tot_E1}
When $\gamma\!=\!1$, for any fixed value of $K$, $E^*$ increases as $\frac{e_m}{e_p}$ increases. %
\end{corollary}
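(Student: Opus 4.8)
The plan is to mirror the argument behind Corollary~\ref{corollary:t_tot_E2}: fix $K$, set $\gamma = 1$, and track how the stationary point of $\tilde{\Expect}[C_\textnormal{tot}]$ in $E$ moves as the ratio $\rho := e_m/e_p$ changes. By Theorem~\ref{theorem:biconvex}, for a fixed $K$ the map $E \mapsto \tilde{\Expect}[C_\textnormal{tot}]$ is strictly convex on $E \ge 1$, so its minimizer is unique, and in the interior of the feasible set it coincides with the stationary point. Differentiating the $\gamma = 1$ objective \eqref{ob5} in $E$, the common factor $K$ cancels and, after clearing the denominator and dividing by $e_p$, the condition $\partial \tilde{\Expect}[C_\textnormal{tot}]/\partial E = 0$ becomes the cubic
\begin{equation*}
\psi(E,\rho) := 2E^3 + \rho\!\left(E^2 - \frac{A_0}{\beta}\right) = 0, \qquad \beta := B_0\!\left(1 + \frac{N-K}{K(N-1)}\right) > 0 .
\end{equation*}
For each $\rho > 0$, $\psi(\cdot,\rho)$ is strictly increasing on $(0,\infty)$ with $\psi(0^{+},\rho) < 0$, so it has a unique positive root $E^{*}(\rho)$.

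The key step is to determine the sign of $dE^{*}/d\rho$ without solving the cubic. First I would note that at the root, $2(E^{*})^{3} = \rho\bigl(A_0/\beta - (E^{*})^{2}\bigr)$; since the left side is positive and $\rho > 0$, this forces $(E^{*})^{2} < A_0/\beta$, i.e. $\partial\psi/\partial\rho = (E^{*})^{2} - A_0/\beta < 0$ at $E^{*}$. Since $\partial\psi/\partial E = 6E^{2} + 2\rho E > 0$ for $E > 0$, the implicit function theorem gives $dE^{*}/d\rho = -(\partial\psi/\partial\rho)/(\partial\psi/\partial E) > 0$. (Equivalently, one can compare directly: for $\rho_1 < \rho_2$, $\psi\bigl(E^{*}(\rho_1),\rho_2\bigr) = (\rho_2-\rho_1)\bigl((E^{*}(\rho_1))^{2} - A_0/\beta\bigr) < 0$, and monotonicity of $\psi(\cdot,\rho_2)$ forces its root $E^{*}(\rho_2)$ to exceed $E^{*}(\rho_1)$.) Either way, $E^{*}$ is strictly increasing in $e_m/e_p$, which is the claim.

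The calculation is routine; the only real subtlety is the feasibility constraint $E \ge 1$ in \textbf{P3}. When the stationary point lies below $1$, the constrained optimum is clamped to $E^{*} = 1$ and is momentarily insensitive to $\rho$, so the statement should be read (consistent with the rest of Section~\ref{sec:property}, which treats the continuous, pre-rounding $E^{*}$) as monotonicity of the interior stationary solution, which becomes strict once it exceeds $1$. I expect this boundary bookkeeping to be the main thing to handle carefully; the core of the argument — reading off $\mathrm{sign}\bigl((E^{*})^{2} - A_0/\beta\bigr)$ directly from the stationarity equation — is immediate.
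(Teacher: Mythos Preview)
Your argument is correct and matches the approach the paper indicates: the paper omits the proof of Corollary~\ref{corollary:e_tot_E1}, saying only that it parallels Corollary~\ref{corollary:t_tot_E2} (also omitted), which in turn builds on the first-order analysis used for Corollary~\ref{corollary:t_tot_E1}. Your derivation of the cubic stationarity condition, the observation that $(E^*)^2 < A_0/\beta$ at the root, and the implicit-function sign computation are exactly the kind of argument the paper is gesturing at, and your handling of the $E\ge 1$ boundary is the appropriate caveat.
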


The proofs and intuitions for Corollaries 3 and 4 are similar to Corollaries 1 and 2, which we omit  due to page limitation.

\subsection{Trade-off Between 
Learning Time and Energy Consumption}
In the above analysis, %
we derived a trade-off design principle for $K$, with a larger $K$ favoring learning time reduction,  while a smaller $K$  favoring  energy saving.  %
For a given $\gamma$, the optimal $K$ achieves the right balance between reducing learning time and energy consumption. %

\begin{theorem}
\label{K_E_gamma}
Assume that the power used for computation and communication are the same (i.e., $\frac{e_m}{t_m}=\frac{e_p}{t_p}$), then $K^*$ and $E^*$ both decrease as $\gamma$ increases. %
\end{theorem}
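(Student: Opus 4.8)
The plan is to exploit the biconvexity and the closed-form stationarity conditions \eqref{opt_K} and \eqref{partial_E}, specializing them under the assumption $\frac{e_m}{t_m}=\frac{e_p}{t_p}=:\rho$. First I would substitute $e_p=\rho t_p$ and $e_m=\rho t_m$ into \eqref{opt_K} to get
\begin{equation}
\notag
K=\sqrt{\frac{(1-\gamma)B_0 N\, t_p E^2(E + t_m/t_p)}{\gamma\rho\,[B_0(N-2)E^2+A_0(N-1)]\,t_p(E+t_m/t_p)}}
=\sqrt{\frac{(1-\gamma)B_0 N E^2}{\gamma\rho\,[B_0(N-2)E^2+A_0(N-1)]}},
\end{equation}
so that for \emph{fixed} $E$, the stationary-in-$K$ value is a strictly decreasing function of $\gamma$ (since $\frac{1-\gamma}{\gamma}$ is strictly decreasing). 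Likewise, dividing numerator and denominator of the cubic coefficient in \eqref{partial_E} by $t_p$ (resp.\ $t_m$) and using $e_p/t_p=e_m/t_m=\rho$ shows the coefficient $\frac{2(1-\gamma)t_p+\gamma K e_p}{2(1-\gamma)t_m+\gamma K e_m}=\frac{t_p}{t_m}\cdot\frac{2(1-\gamma)+\gamma K\rho}{2(1-\gamma)+\gamma K\rho}=\frac{t_p}{t_m}$, i.e.\ it is \emph{independent of $\gamma$ and of $K$}. Hence the cubic \eqref{partial_E} reduces to $\frac{t_p}{t_m}E^3+E^2-\frac{A_0}{B_0(1+\frac{N-K}{K(N-1)})}=0$, whose unique positive root is a strictly increasing function of $\frac{A_0}{B_0(1+\frac{N-K}{K(N-1)})}$, which in turn is increasing in $K$ (since $1+\frac{N-K}{K(N-1)}$ decreases in $K$). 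So $E^\ast$ is, along the stationary manifold, an increasing function of $K^\ast$.

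Next I would combine these two monotonicities. Write $K=g(E;\gamma)$ for the map defined by the simplified \eqref{opt_K} and $E=h(K)$ for the map defined by the simplified cubic. We have just argued: $g$ is strictly decreasing in $\gamma$ for every fixed $E$, $g$ is increasing in $E$, and $h$ is strictly increasing in $K$. The optimal pair $(K^\ast,E^\ast)$ is the unique fixed point of $E=h(g(E;\gamma))$ guaranteed by Theorem~\ref{theorem:biconvex} (biconvexity plus the ACS convergence argument). I would then run a monotone-comparison argument: take $\gamma_1<\gamma_2$ and show the fixed point moves down in both coordinates. Concretely, the composite map $E\mapsto h(g(E;\gamma))$ shifts pointwise downward when $\gamma$ increases (because $g$ shifts down and $h$ is increasing), so its fixed point decreases — this is the standard fact that a downward shift of a map with slope $<1$ at the crossing (which holds here because the fixed point is a strict local min of a biconvex problem, not a repeller) moves the fixed point down. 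That gives $E^\ast(\gamma_2)\le E^\ast(\gamma_1)$; feeding this back through $K^\ast=g(E^\ast;\gamma)$ — decreasing in both arguments jointly along the way the arguments move — gives $K^\ast(\gamma_2)\le K^\ast(\gamma_1)$.

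The hard part will be making the fixed-point comparison rigorous rather than heuristic: one must rule out the possibility that the ACS iteration map has a crossing slope $\ge 1$ (which would flip the direction of the fixed-point shift) and confirm uniqueness of the stationary point in the interior, including handling the boundary cases where $K^\ast$ hits $1$ or $N$ (where the unconstrained stationary formula is not attained and the projection steps in Algorithm~\ref{alg:optimalSolution} activate — but there monotonicity in $\gamma$ is immediate since Theorems~2 and~3 pin $K^\ast$ to the endpoints). A cleaner route that sidesteps the slope issue is to use the first-order conditions directly via the implicit function theorem: treat $\Phi(K,E,\gamma)=\big(\partial_K\tilde{\Expect}[C_\textnormal{tot}],\ \partial_E\tilde{\Expect}[C_\textnormal{tot}]\big)=0$, note the $2\times 2$ Jacobian $\partial_{(K,E)}\Phi$ is positive definite at the optimum (its diagonal entries are the positive second derivatives from Theorem~\ref{theorem:biconvex} and, for the case $e_m/t_m=e_p/t_p$, a direct computation shows the off-diagonal term is dominated — indeed $\partial^2_{KE}\tilde{\Expect}[C_\textnormal{tot}]$ can be shown to have a sign making the determinant positive), and compute $\partial_\gamma\Phi$. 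Since lowering $\gamma$ only appears through the replacement of time-weights by energy-weights, under $e=\rho t$ one finds $\partial_\gamma(\partial_K\tilde{\Expect}[C_\textnormal{tot}])>0$ and $\partial_\gamma(\partial_E\tilde{\Expect}[C_\textnormal{tot}])$ has a controlled sign, so $\frac{d(K^\ast,E^\ast)}{d\gamma}=-\big(\partial_{(K,E)}\Phi\big)^{-1}\partial_\gamma\Phi$ has both components negative. I would present the implicit-function-theorem version as the main argument and relegate the verification of the Jacobian's definiteness and the sign of $\partial_\gamma\Phi$ to a short computation, noting that the homogeneous-power assumption $\frac{e_m}{t_m}=\frac{e_p}{t_p}$ is exactly what collapses the cross-terms enough to keep the signs unambiguous.
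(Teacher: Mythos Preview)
Your approach is essentially the paper's: both exploit the assumption $e_p/t_p=e_m/t_m$ to collapse \eqref{opt_K} to a $K=g(E;\gamma)$ that is decreasing in $\gamma$ and increasing in $E$, and to make the leading coefficient in \eqref{partial_E} equal to $t_p/t_m$ independently of $K$ and $\gamma$, so that the cubic gives $E=h(K)$ increasing in $K$ alone; then both chain these monotonicities. The paper's informal ``$K$ will decrease, then $E$ will decrease, and so on, until converging'' is exactly your fixed-point comparison, phrased as an ACS iteration started from the old optimum.

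One remark on execution: your worry about ``crossing slope $\ge 1$'' is unnecessary once you adopt the paper's iterate viewpoint. Because $\Psi(\cdot;\gamma):=h\circ g(\cdot;\gamma)$ is \emph{increasing} in $E$, starting from $E_0=E^*(\gamma_1)$ with $\Psi(E_0;\gamma_2)<E_0$ gives by induction $E_{n+1}=\Psi(E_n;\gamma_2)<\Psi(E_{n-1};\gamma_2)=E_n$, a monotone decreasing bounded sequence that converges to a fixed point below $E^*(\gamma_1)$ without any contractivity assumption. So the monotone-iterate argument already delivers the conclusion; your implicit-function-theorem route is a valid but heavier alternative the paper does not pursue. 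The one genuine gap you correctly flag and the paper glosses over is uniqueness of the interior stationary point (so that the iterate limit is indeed $E^*(\gamma_2)$); the boundary cases $K^*\in\{1,N\}$ are, as you note, covered by Theorems~2 and~3.
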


We omit the full proof due to page limitation. Intuitively, $\frac{e_m}{t_m}=\frac{e_p}{t_p}$ yields $\frac{t_p}{t_m}=\frac{e_p}{e_m}$. Hence, the quantities $\frac{t_p E + t_m}{e_p E + e_m}$ and $\frac{2\left(1-\gamma\right)t_p+\gamma K e_p}{ 2\left(1-\gamma\right)t_{m}+\gamma Ke_m}$ in \eqref{opt_K} and \eqref{partial_E}, respectively, remain unchanged regardless of the values of $K$, $E$, and $\gamma$. By some algebraic manipulations, we can see from \eqref{opt_K} and \eqref{partial_E} that when $\gamma$ increases or $E$ decreases, the value of $K$ according to \eqref{opt_K} decreases; when $K$ decreases, the solution of $E$ from \eqref{partial_E} also decreases (note that this solution remains unchanged regardless of $\gamma$). Therefore, whenever $\gamma$ increases, $K$ will decrease, then $E$ will decrease, and so on, until converging to a new $(K^*, E^*)$ that is smaller than before, and vice versa.

\begin{figure}[!t]
	\centering
	\includegraphics[width=8.cm,height=4cm]{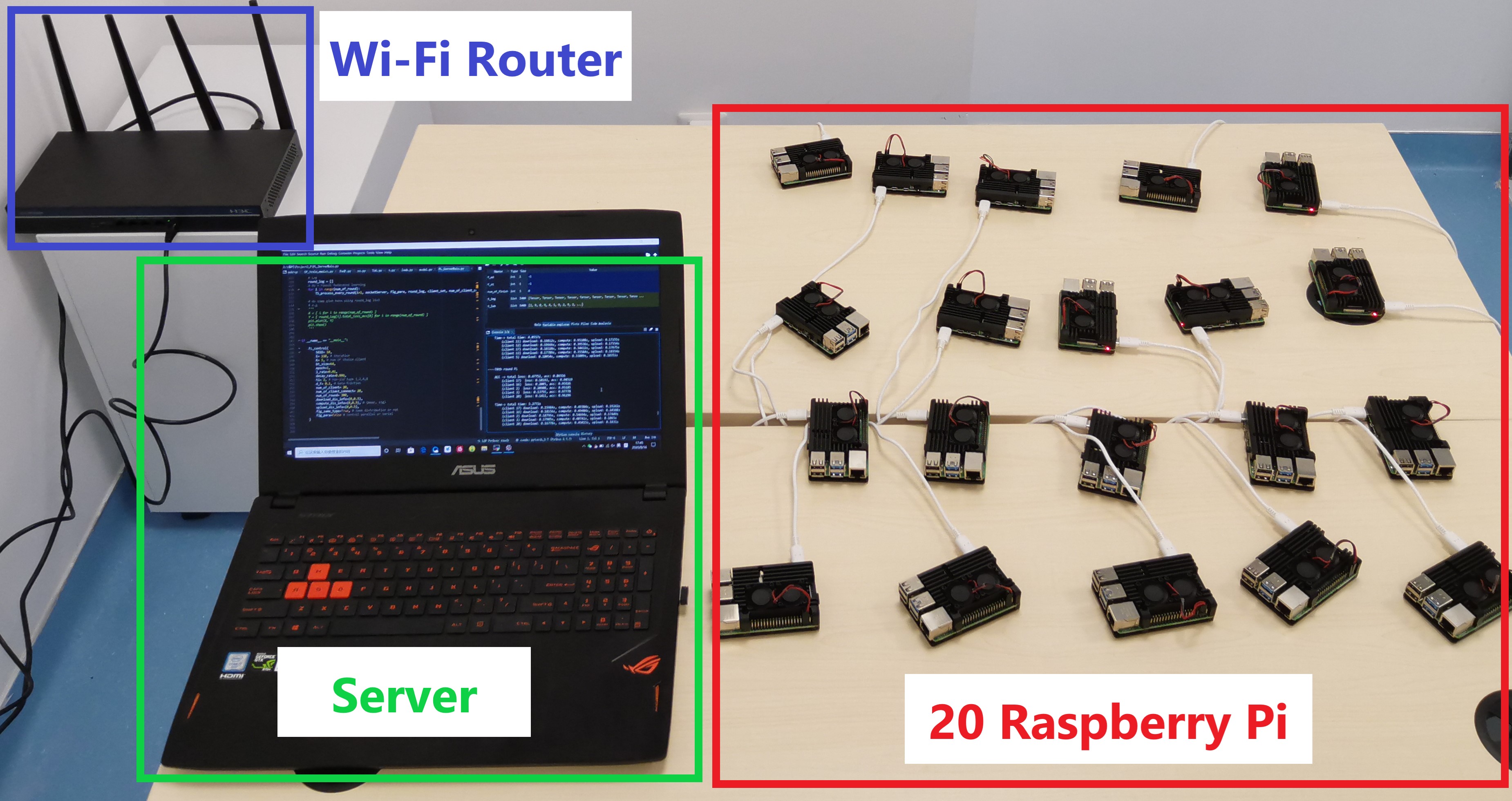}
	\caption{Hardware prototype with the laptop being central server, 20 Raspberry Pi being devices. During the FL experiments, the wireless router is placed 5 meters away from all the devices.}
\end{figure}

\section{Experimental Evaluation}
\label{sec:experimentation}

In this section, we evaluate the performance of our proposed cost-effective FL algorithm and verify our derived solution properties. We start by presenting the evaluation setup, and then show the experimental results.

\begin{table*}[!t]
\scriptsize
 \caption{Number of rounds for reaching estimation loss $F_a$ and $F_b$ for estimation of $\frac{A_0}{B_0}$ for three Setups \vspace{-0.05in}}
 \label{sample3}
  \centering
\begin{tabular}{c||c||c|c|c|c|c|c|c|c||c}
\toprule[1.2pt]
\multirow{3}{*}{\makecell[c]{Setup 1}}%
& \multirow{3}{*}{\makecell[c]{Estimation loss\\$F_a=0.6$\\\!$F_b=0.5$}}    & \scriptsize{Samples of $\left(K,E\right)$} & $\left(10, 50\right)$ &  $\left(15, 150\right)$ & $\left(20, 100\right)$ & $\left(10, 200\right)$ &$\left(20, 300\right)$  &-  & -& \multirow{3}{*}{{\makecell[c]{\textbf{Estimated}\\$\frac{A_0}{B_0}$=73,560}}  } \\ \cline{3-10}
                       &                       & \scriptsize{Rounds to achieve $F_a$} &  48 & 28 & 32 & 27 & 20 & -&- &                \\ \cline{3-10}
                       &                       & \scriptsize{Rounds to achieve $F_b$} &   82 & 46 & 56 & 43 & 35  & -& -&                 \\ \midrule[1.2pt]

\multirow{3}{*}{Setup 2}    & \multirow{3}{*}{\makecell[c]{Estimation loss\\$F_a=0.3$\\$F_b=0.2$}}    & Samples of $\left(K,E\right)$ &   $\left(10, 10\right)$ &  $\left(20, 20\right)$ & $\left(30, 30\right)$ & $\left(40, 40\right)$  & $\left(50, 50\right)$ &  $\left(60, 60\right)$ & $\left(80, 80\right)$ & \multirow{3}{*}{\makecell[c]{\textbf{Estimated}\\$\frac{A_0}{B_0}$=3,140}} \\ \cline{3-10}
                               &                       & Rounds to achieve $F_a$ &67  &37  &25  & 22 &  18&17  & 16 &                  \\ \cline{3-10}
                       &                       & Rounds to achieve $F_b$ &100 &60  & 39 &34  & 28 & 25   &24     \\ \midrule[1.2pt]
\multirow{3}{*}{Setup 3}    & \multirow{3}{*}{\makecell[c]{Estimation loss\\$F_a=1.5$\\$F_b=1.3$}}    & Samples of $\left(K,E\right)$&    $\left(10, 10\right)$ &  $\left(20, 20\right)$ & $\left(30, 30\right)$ & $\left(40, 40\right)$  & $\left(50, 50\right)$ &  $\left(60, 60\right)$ &$\left(80, 80\right)$ & \multirow{3}{*}{{\makecell[c]{\textbf{Estimated}\\$\frac{A_0}{B_0}$=3,750}}  } \\ \cline{3-10}
                           &                       & Rounds to achieve $F_a$  &  52& 39 &34  &31  &30  & 30 &29 &                  \\ \cline{3-10}
                       &                       & Rounds to achieve $F_b$  & 106 & 68 & 57 & 52 & 49 &48  &48 &                    \\ \bottomrule[1.2pt]
\end{tabular}
\end{table*}

\begin{figure*}[!t]
\centering
\subfigure[Loss with different %
$E$]{\label{hd_lossa}\includegraphics[width=3.53cm,height=3.1cm]{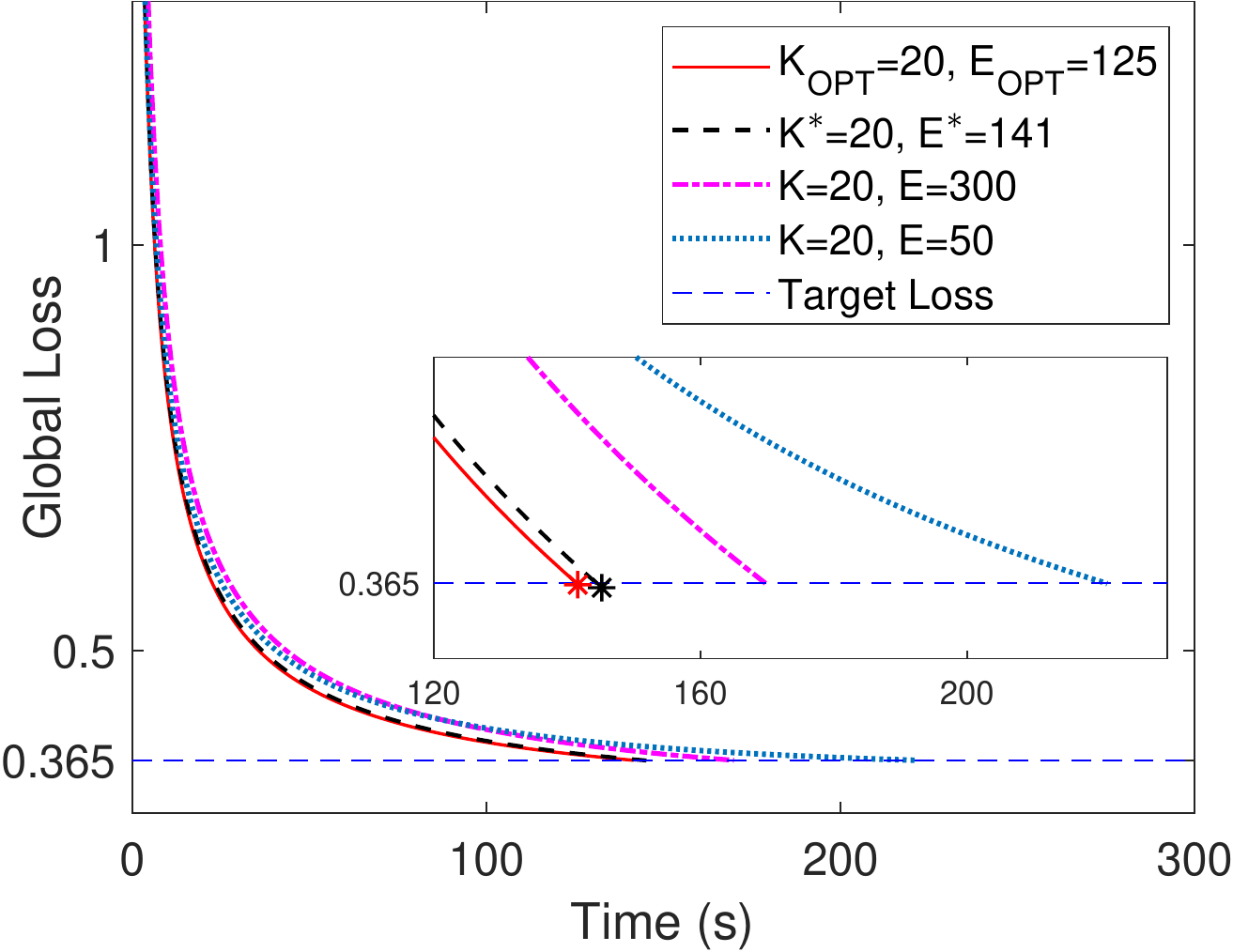}}
\subfigure[Loss with different %
$K$]{\label{hd_lossc}\includegraphics[width=3.53cm,height=3.1cm]{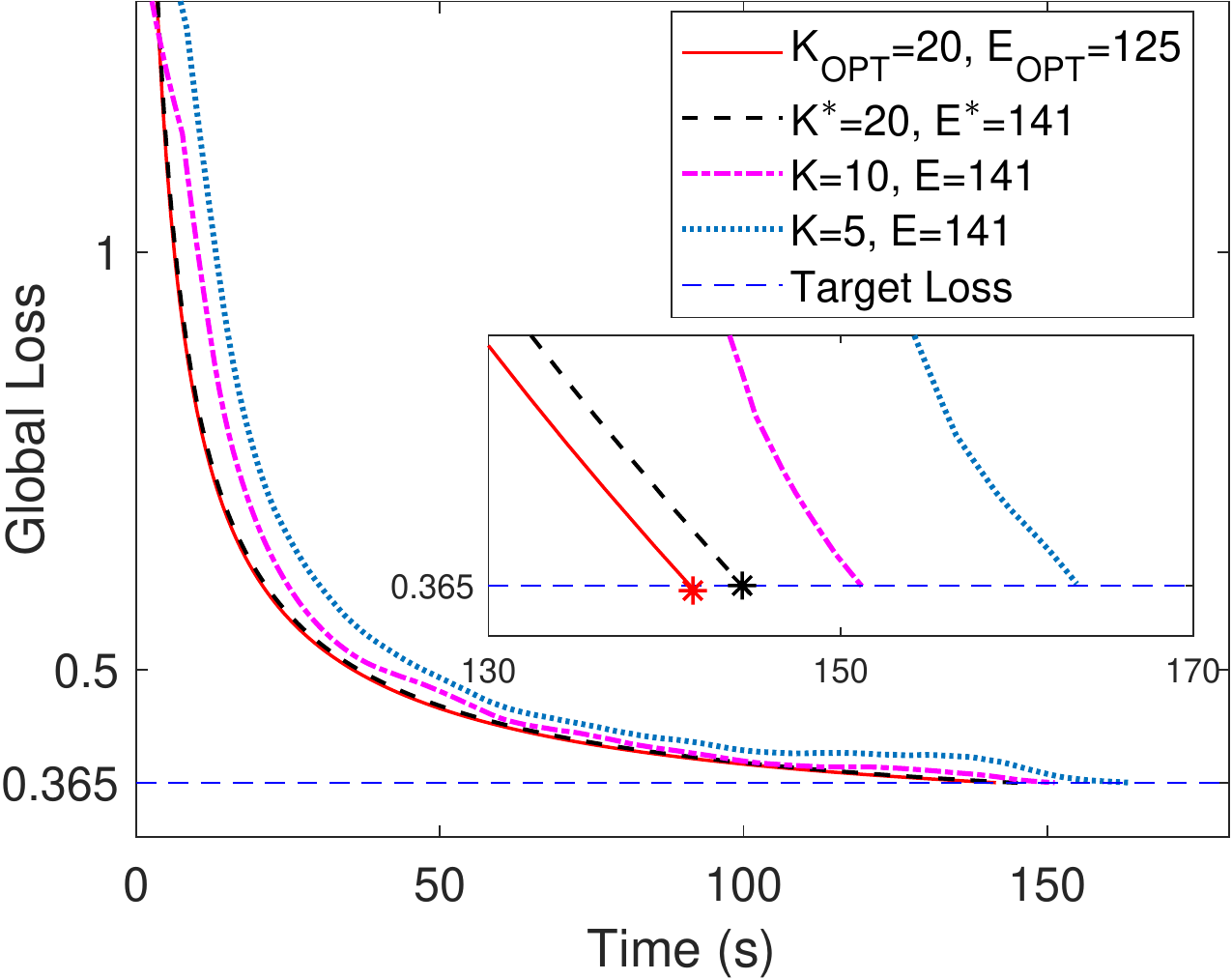}}
\subfigure[$t_\textnormal{tot}$ with different $\left(K, E\right)$ ]{\label{hd_t_tot}\includegraphics[width=3.57cm,height=3.1cm]{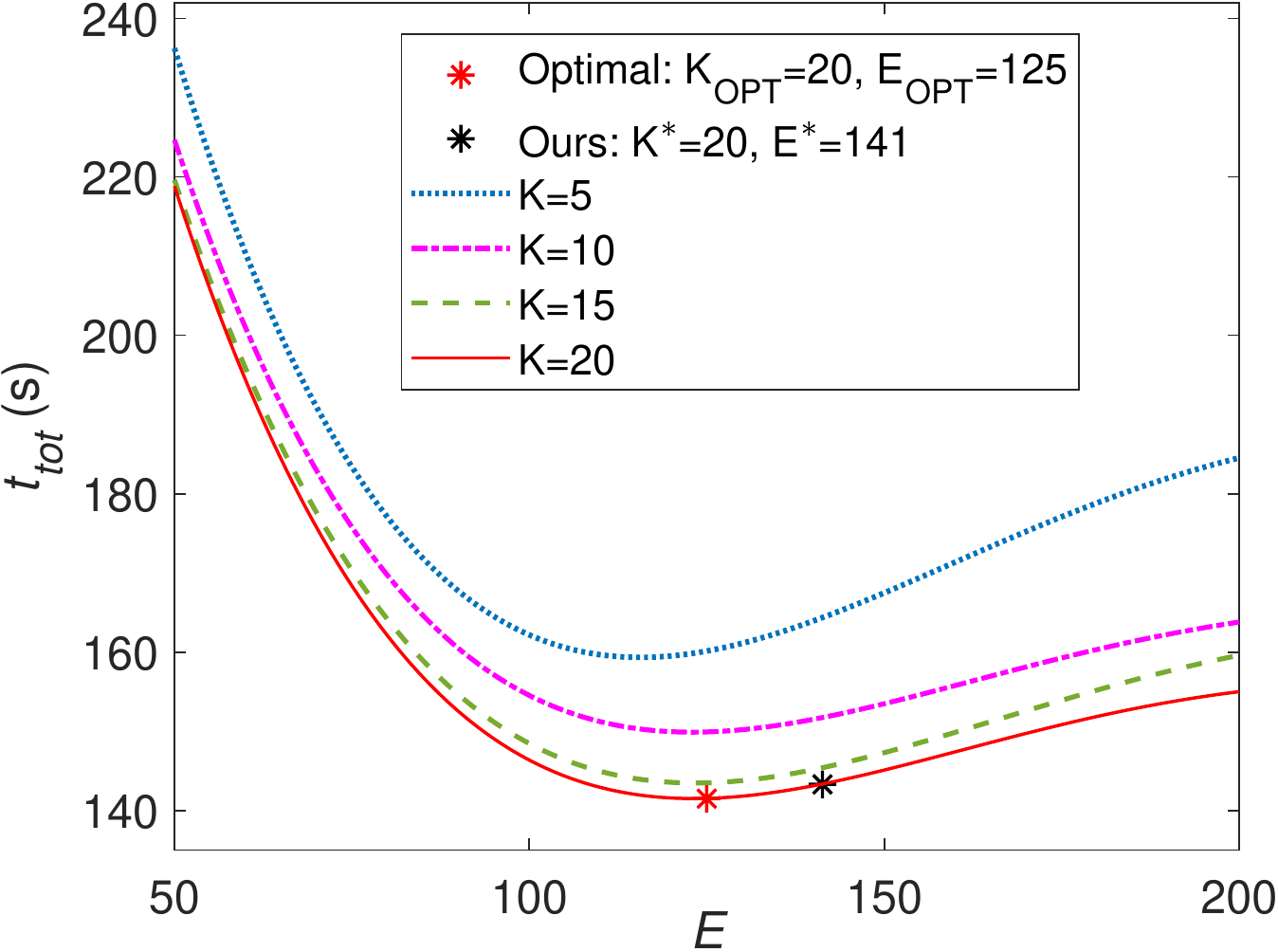}}
\subfigure[Accuracy with different $E$ ]{\label{hd_lossb}\includegraphics[width=3.53cm,height=3.1cm]{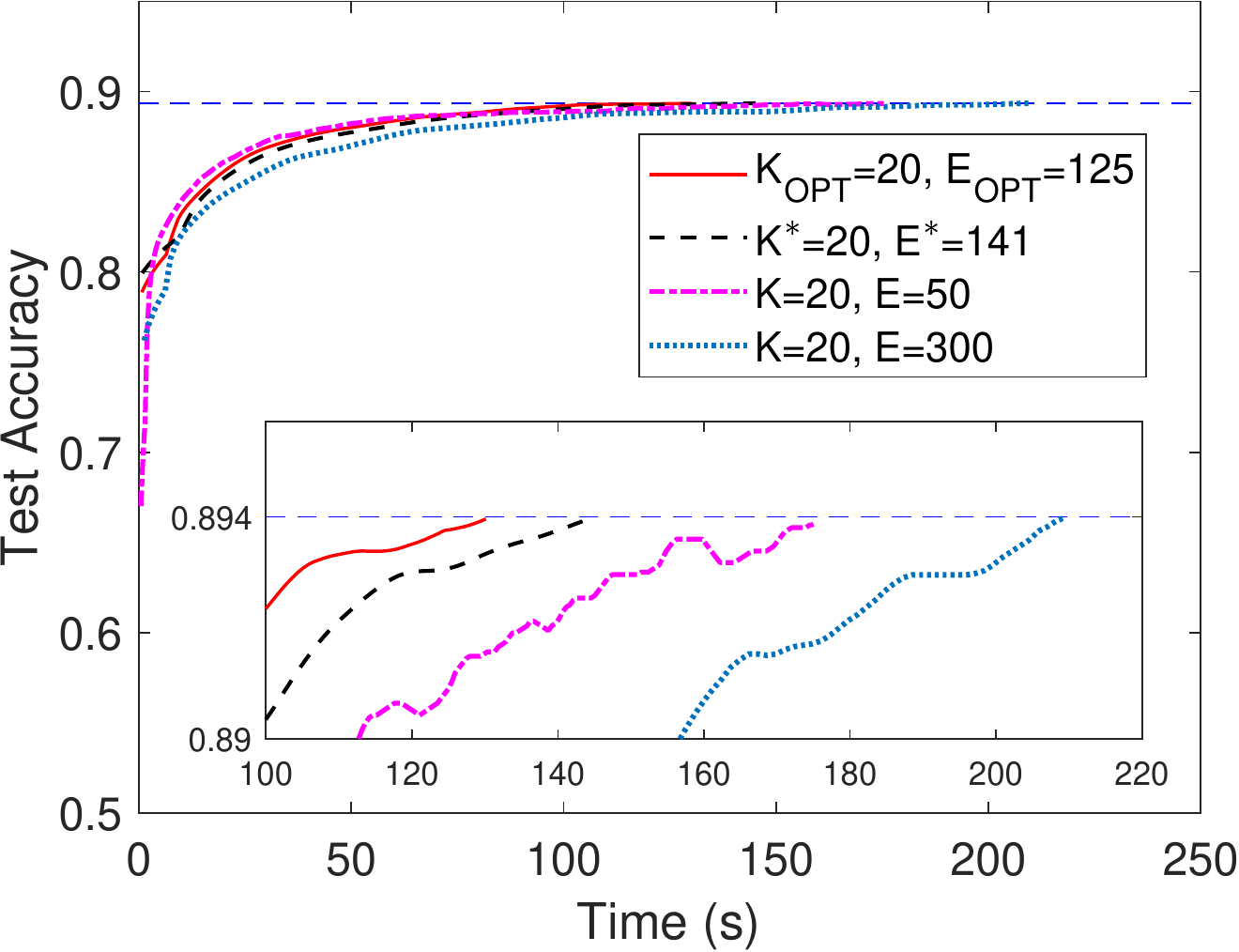}}
\subfigure[Accuracy with different $K$ ]{\label{hd_acc_t}\includegraphics[width=3.53cm,height=3.1cm]{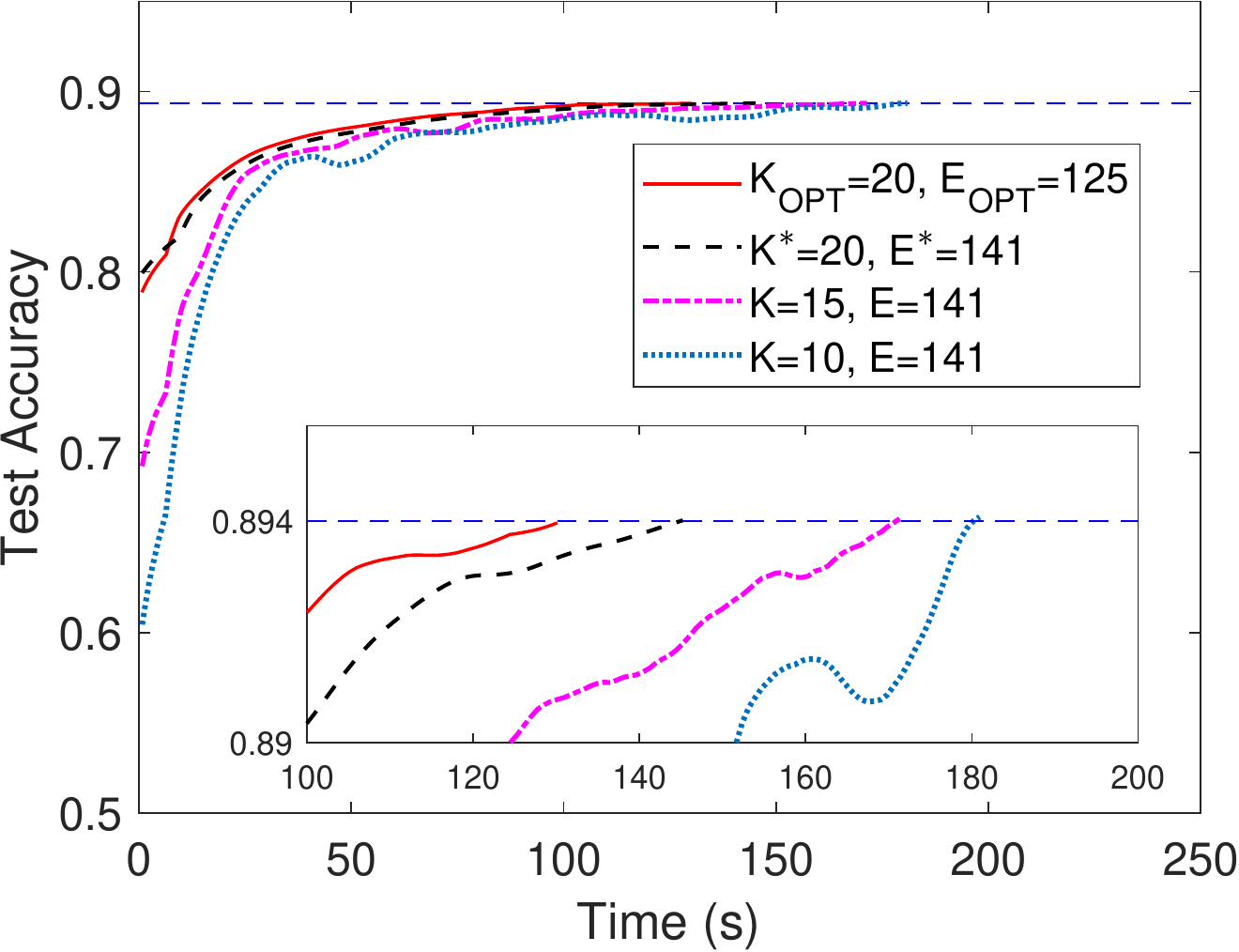}}
\caption{%
Training performance of \textbf{Setup 1} with logistic regression and MNIST for $\gamma\!=\!0$. (a)-(c): Our solution achieves the target loss $0.365$ using $145.2$s compared to the optimum $141.5$s with optimality error rate $2.61\%$, %
but faster than %
those with $E$ being too small or too large %
and those with $K$ being small. (d)-(e): 
 Our solution achieves $89.4\%$ test accuracy slightly longer than the optimal solution, but faster than %
 the non-optimal values of $\left(K,E\right)$ in (a) and (b).}
\label{hd}
\vspace{-0.1in}
\end{figure*}

\subsection{Experimental Setup}
\subsubsection{Platforms}
We conducted experiments both on a networked hardware prototype system and in a simulated environment. 
Our prototype system, as illustrated in Fig.~2,
consists of $N=20$ Raspberry Pis (version 4) serving as devices and a laptop computer serving as the central server. All devices are interconnected via an enterprise Wi-Fi router, and we developed a TCP-based socket interface for the peer-to-peer connection. In the simulation system, we simulated $N=100$ virtual devices and a virtual central server.  

\subsubsection{Datasets and Models} We evaluate our results both on a real dataset and a synthetic dataset. For the real dataset, we adopted the widely used MNIST dataset \cite{lecun1998gradient}, which contains square $28\times28=784$ pixel gray-scale images of $70,000$ handwritten digits ($60,000$ for training and $10,000$
for testing). For the synthetic dataset, we follow a similar setup to that in \cite{li2018federated}, which generates $60$-dimensional random vectors as input data. The synthetic data is denoted by $Synthetic \  (\alpha, \beta)$ with $\alpha$ and $\beta$ representing the statistical heterogeneity (i.e., how non-i.i.d. the data are). %
We adopt both the \emph{convex} {multinomial logistic regression} model \cite{li2019convergence}
and the \emph{non-convex} {deep convolutional neural network (CNN)} model with LeNet-5 architecture \cite{lecun1998gradient}.

\subsubsection{Implementation}
Based on the above, we consider the following three experimental setups.

\emph{Setup 1}: We conduct the first experiment on the prototype system using logistic regression and MNIST dataset, where we divide $6,000$ data samples (randomly sampled one-tenth of the total samples) among $N\!=\!20$ Raspberry Pis in a non-i.i.d. fashion with each device containing a balanced number of $300$ samples of only $2$ digits labels. %

    \emph{Setup 2}: We conduct the second experiment in the simulated system using CNN and MNIST dataset, where we divide all $60,000$ data samples among $N\!=\!100$ devices in the same non-i.i.d fashion as in Setup 1, but the amount of data in each device follows the inherent unbalanced digit label distribution of MNIST, where the number of samples in each device has a mean of $600$ and standard deviation of $20.1$. 
    
    \emph{Setup 3}: We conduct the third experiment in the simulated system using logistic regression  and $Synthetic \ (1, 1)$ dataset for statistical heterogeneity, 
    where we generate $24,517$ data samples and distribute them among $N\!=\!100$ devices in an unbalanced power law distribution,  where the number of samples in each device has a mean of $245$ and standard deviation of $362$.

\subsubsection{Training  Parameters}  For all experiments, we initialize our model with $\mathbf{w}_0=\mathbf{0}$ and SGD batch size $b=64$. In each round, we uniformly sample $K$ devices at random, which run $E$ steps of SGD in parallel.  For the prototype system, we use an initial learning rate $\eta_0=0.01$ with a fixed decay rate of $0.996$. For the simulation system, we use decay rate $\frac{{\eta}_0}{1+r}$, where $\eta_0=0.1$ and $r$ is communication round index.  We evaluate the aggregated model in each round on the global loss function. %
 Each result is averaged over 50 experiments.
\subsubsection{Heterogeneous System Parameters}
The prototype system allows us to capture real system heterogeneity in terms of communication and computation time, which we measured the average  $t_p\!=\!3.1 \times 10^{-3}$s with standard deviation $2.3 \times 10^{-4}$s and $t_m\!=\!0.34$s with standard deviation $1.56\times10^{-3}$s. %
We do not capture the energy cost in the prototype system because it is difficult to measure.
For the simulation system, we generate the learning time and energy consumption for each client $k$ using a normal distribution with
mean $t_p\!=\!0.1$s, $t_m\!=\!2$s, $e_p\!=\!10^{-3}$J, and $e_m\!=\!2 \times 10^{-2}$J and standard deviation of the mean divided by $3$. {According to the definition of $\gamma$, we  %
unify the time and energy costs such that one second is equivalent to $1-\gamma$ dollars~(\$) and one Joule is equivalent to $\gamma$ dollars~(\$).} %

\begin{figure*}[!ht]
\centering
\subfigure[\label{cnnlossK}Loss with different %
$E$]{\includegraphics[width=3.46cm,height=3.1cm]{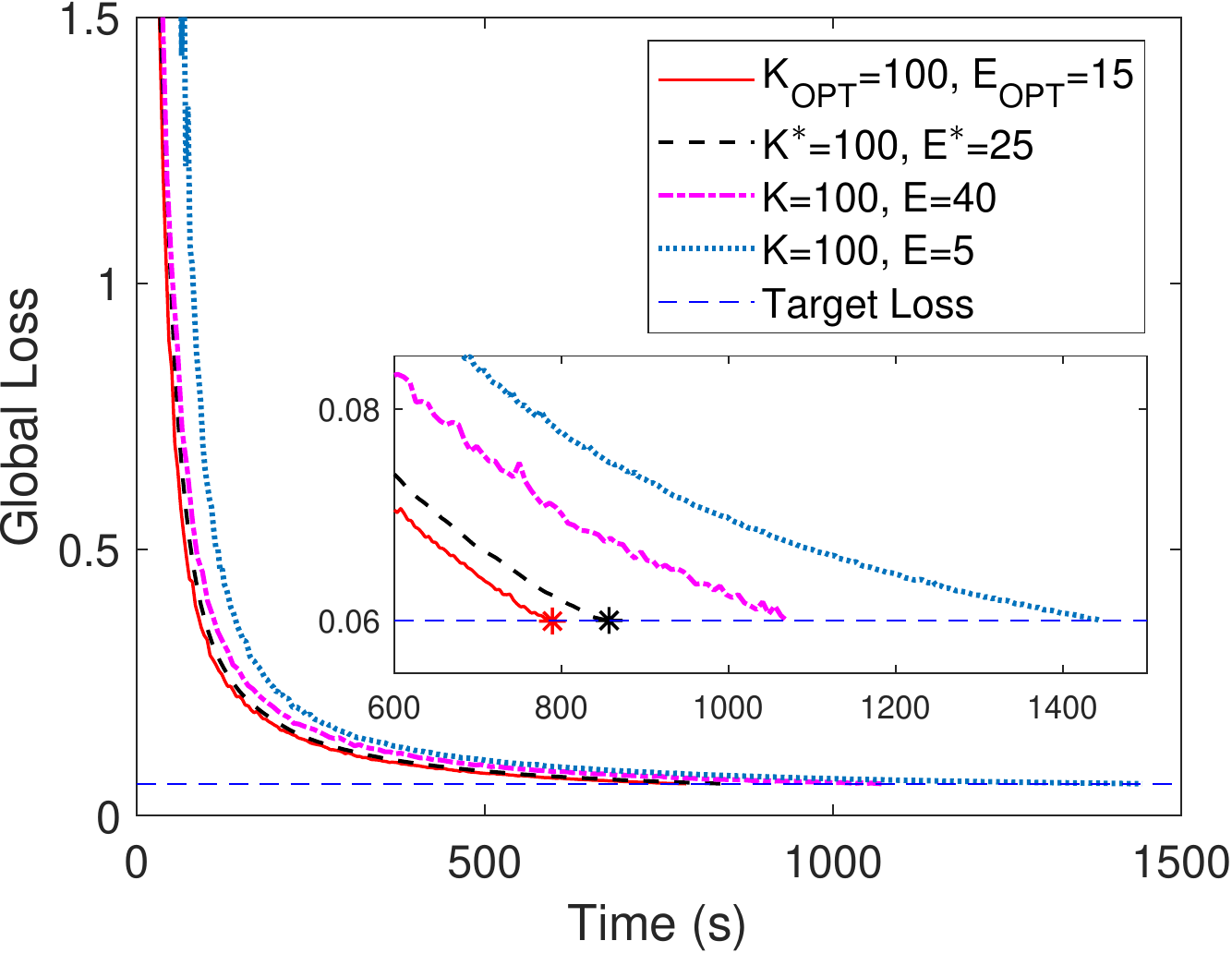}}
\subfigure[\label{cnnlossE}Loss with different $K$]{
\includegraphics[width=3.46cm,height=3.1cm]{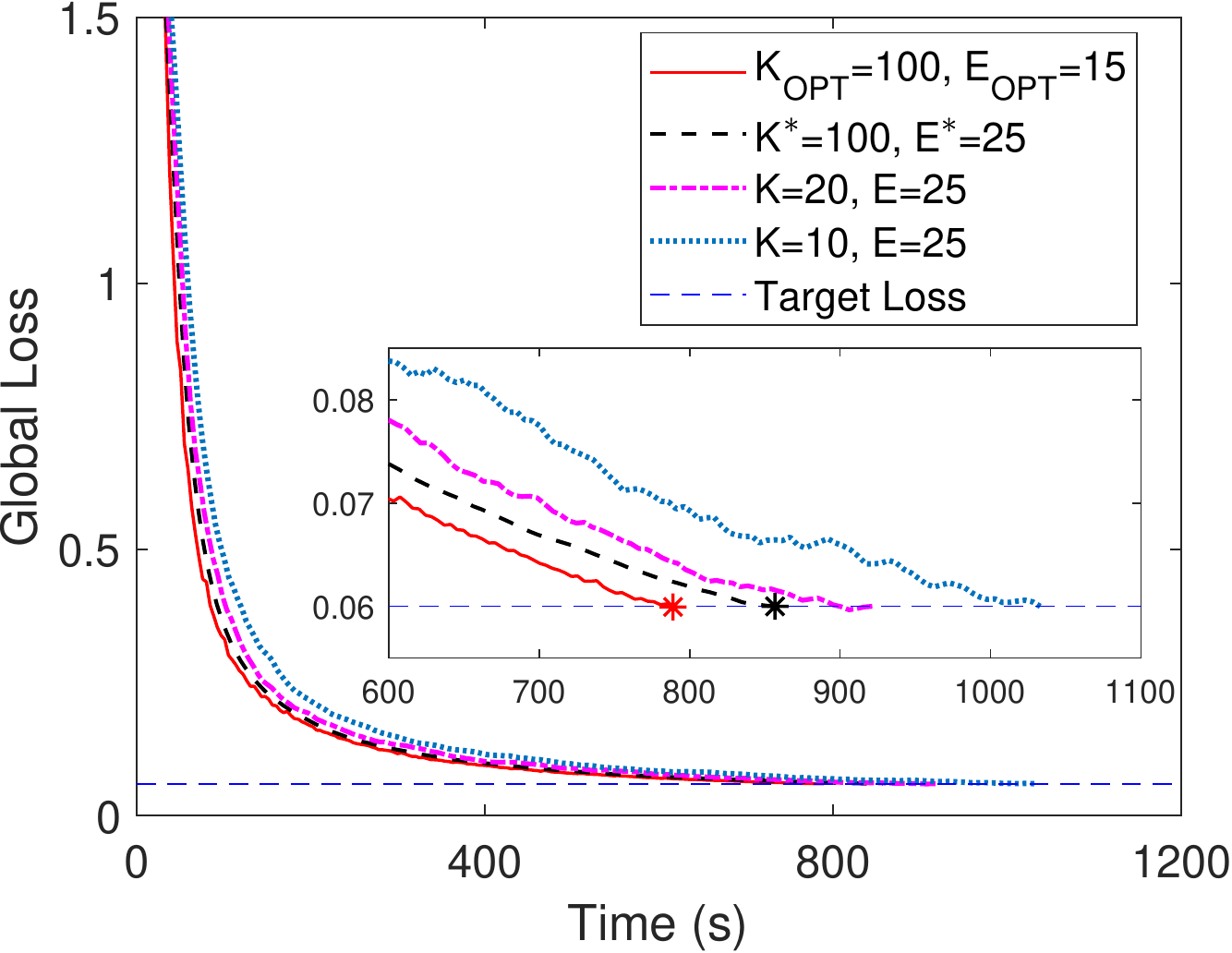}}
\subfigure[$t_\textnormal{tot}$ with different $\left(K, E\right)$ ]{\label{cnn_t_tot}\includegraphics[width=3.46cm,height=3.1cm]{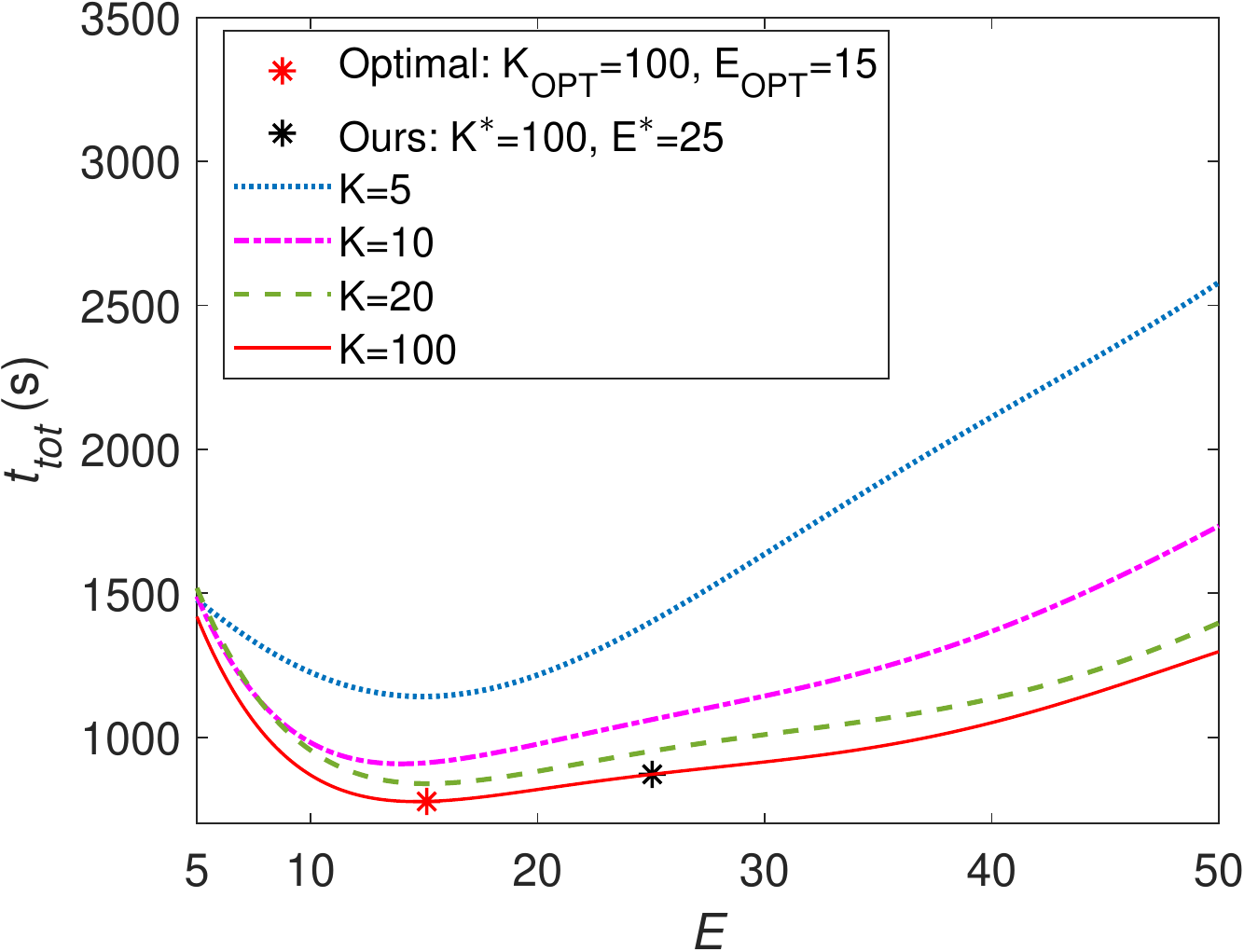}}
\subfigure[\label{cnnaccK}Accuracy with different $E$]{
\includegraphics[width=3.46cm,height=3.1cm]{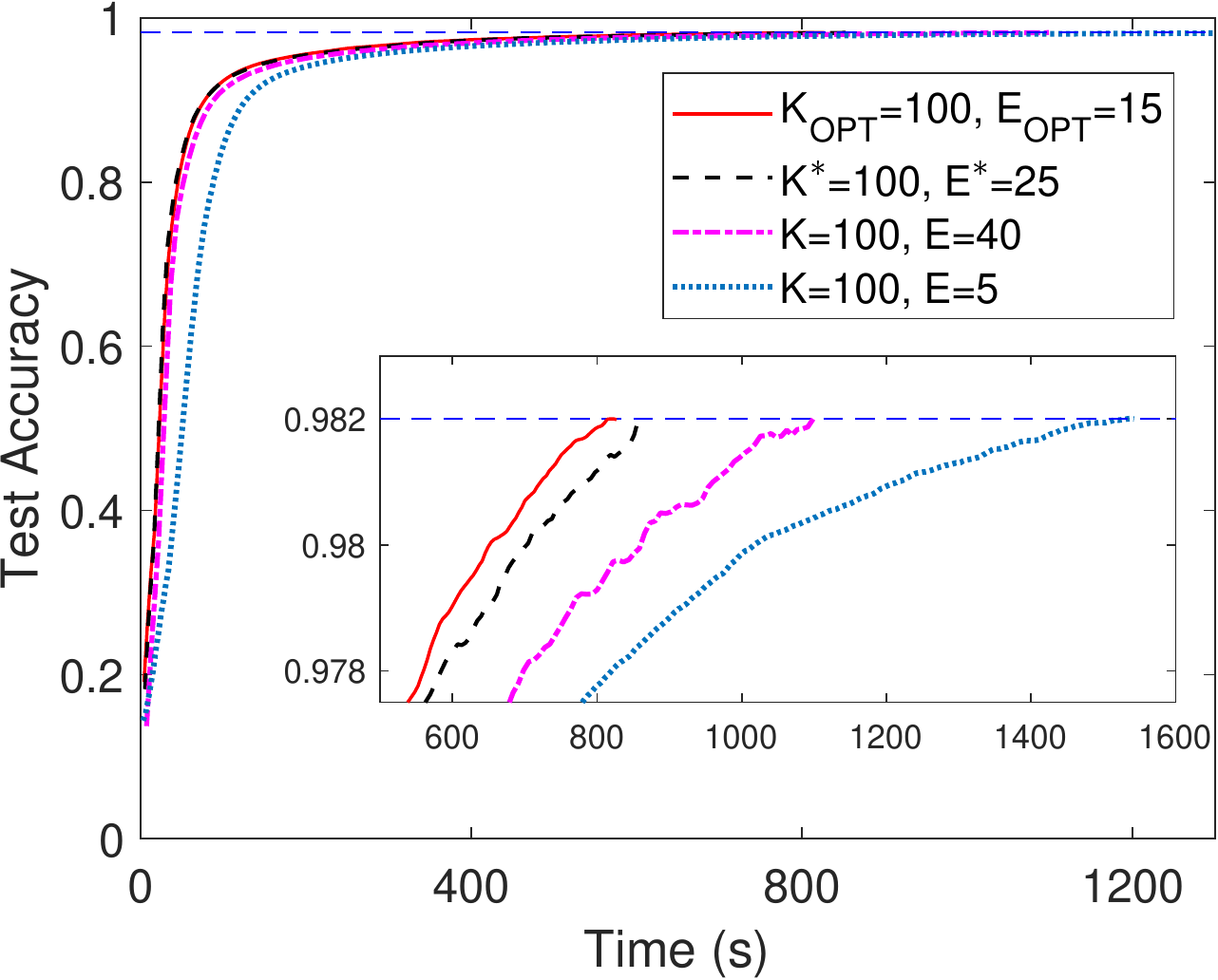}}
\subfigure[\label{cnnaccE}Accuracy with different $K$]{
\includegraphics[width=3.46cm,height=3.1cm]{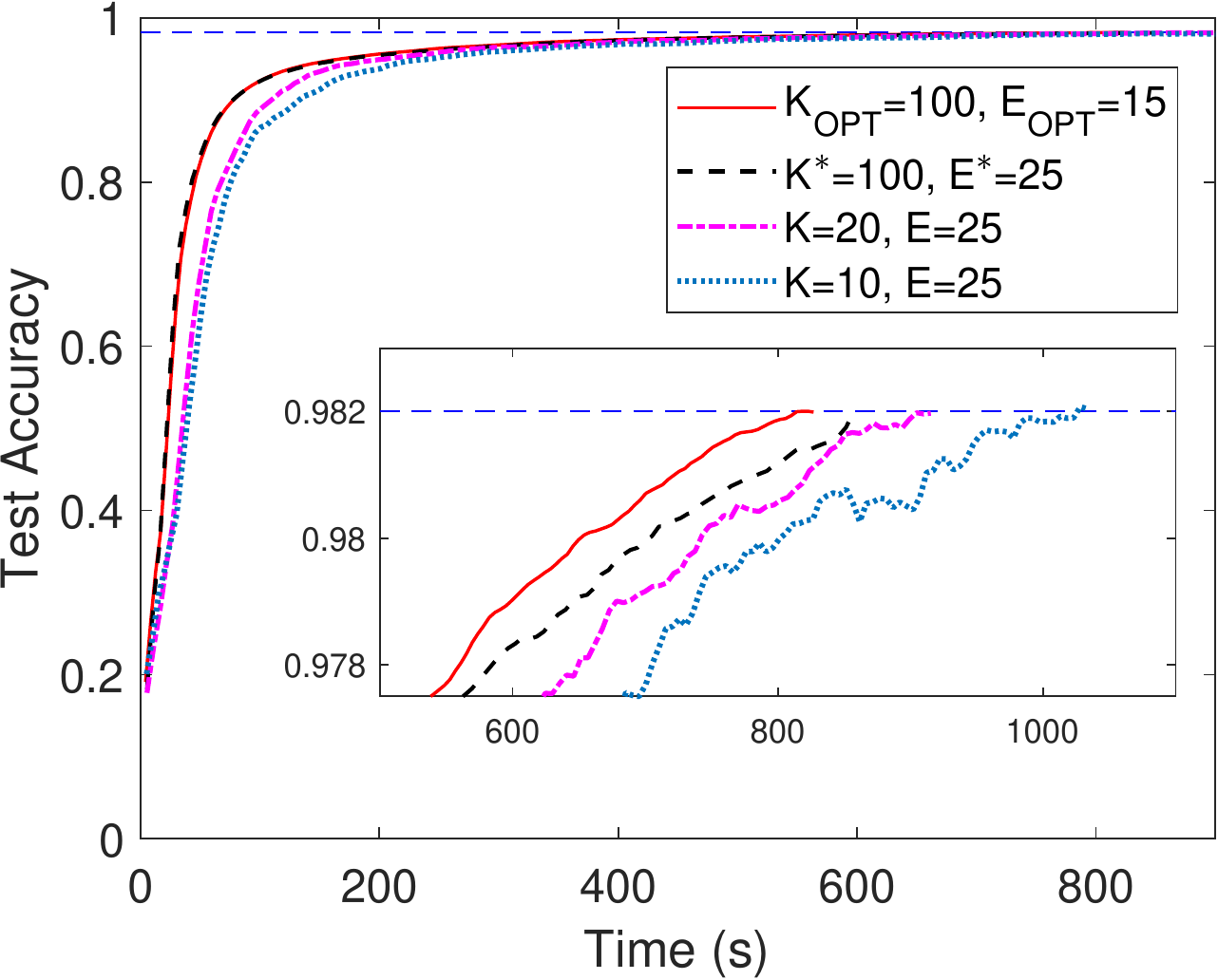}}
\caption{Training performance of \textbf{Setup 2} with CNN and MNIST for $\gamma\!=\!0$. (a)-(c): Our solution achieves the target loss $0.06$ using $856.8$s %
compared to the optimum $789.0$s 
with optimality error rate $8.49\%$, but faster than %
those with $E$ being too small or too large %
and those with $K$ being small. (d)-(e): 
 Our solution achieves $98.2\%$ test accuracy with almost the same time as the optimal solution, but faster than %
 the non-optimal values of $\left(K,E\right)$ in (a) and (b).}
\label{cnn}
% \vspace{-0.05in}
\end{figure*}

\begin{figure*}[t]
\centering
\subfigure[\label{tot_K_E}$\gamma=0$]{
\includegraphics[width=3.53cm,height=3.1cm]{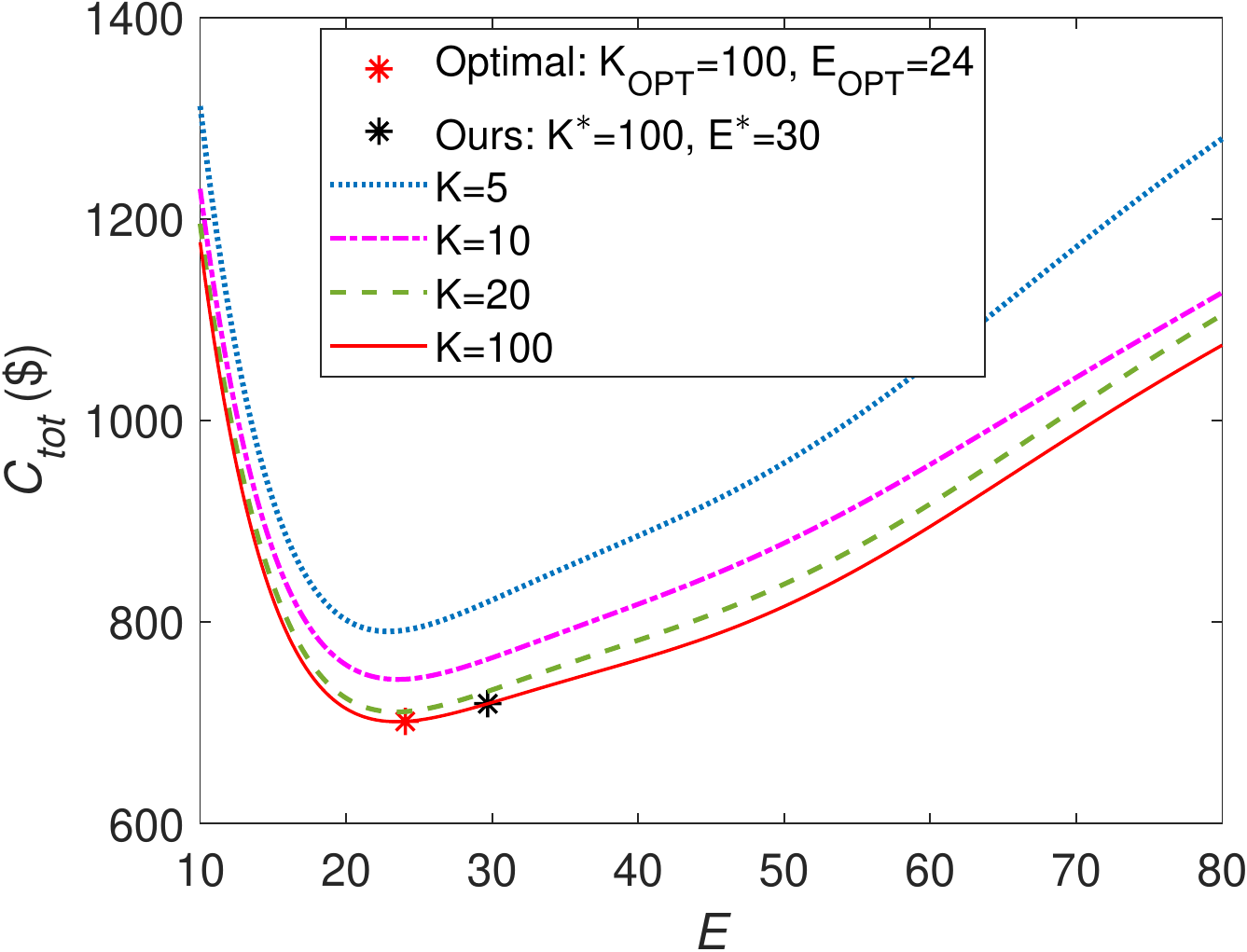}}
\subfigure[\label{ctot_K_E}$\gamma=0.45$]{\includegraphics[width=3.49cm,height=3.1cm]{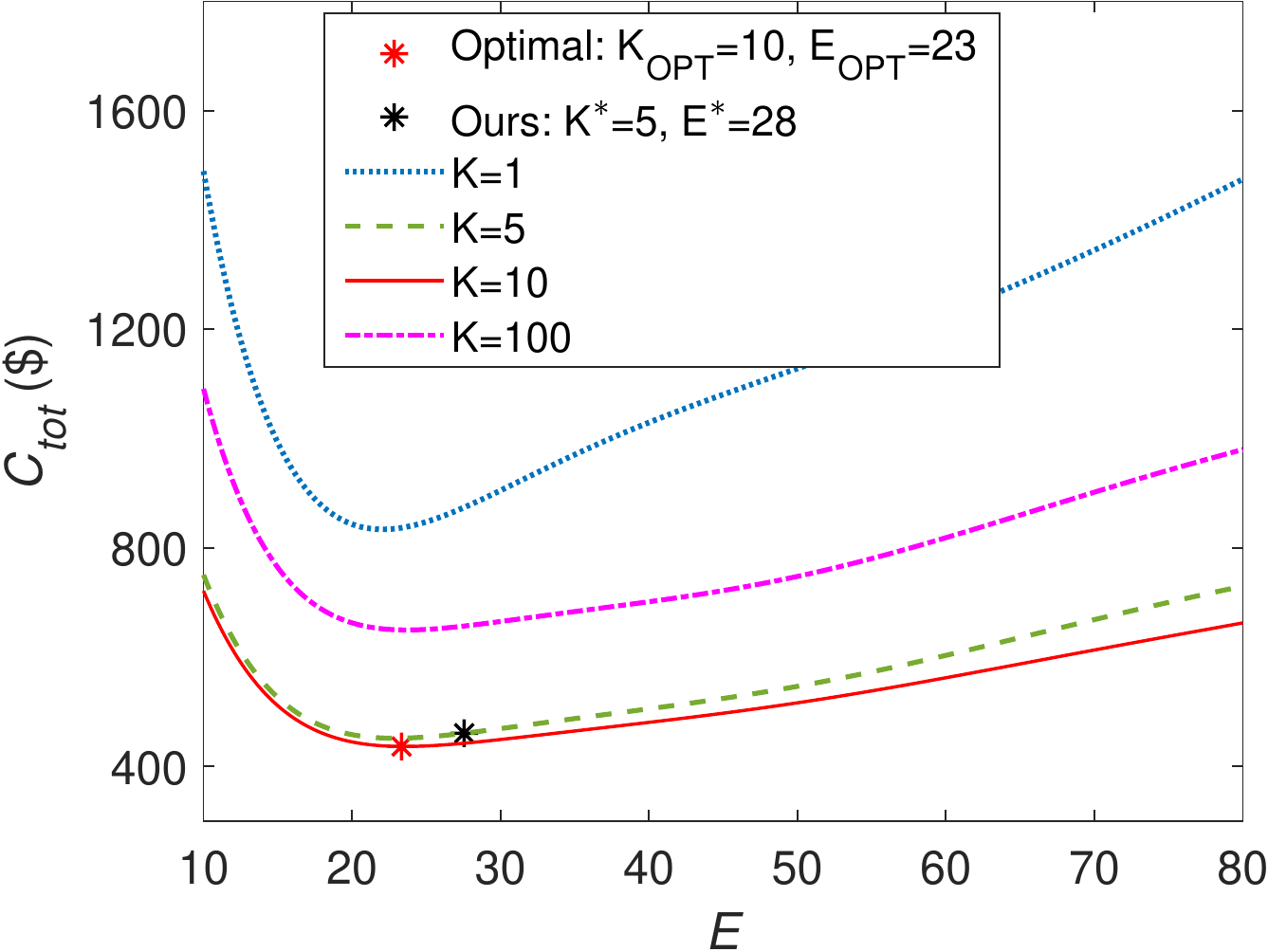}}
\subfigure[\label{etot_K_E}$\gamma=1$]{
\includegraphics[width=3.49cm,height=3.1cm]{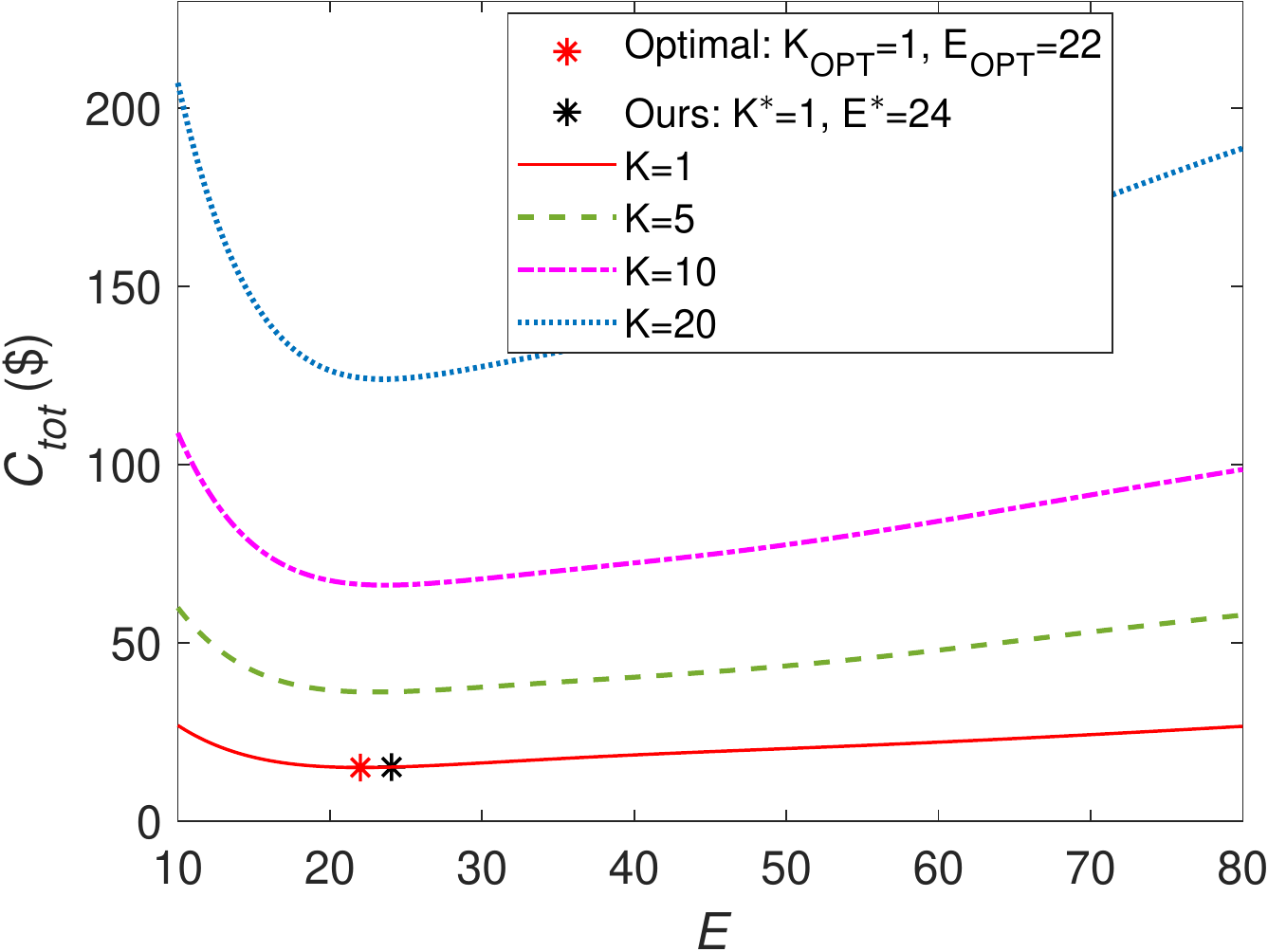}}
\subfigure[\label{tradeoffctot}$C_\textnormal{tot}$ with $\gamma$]{\includegraphics[width=3.47cm,height=3.1cm]{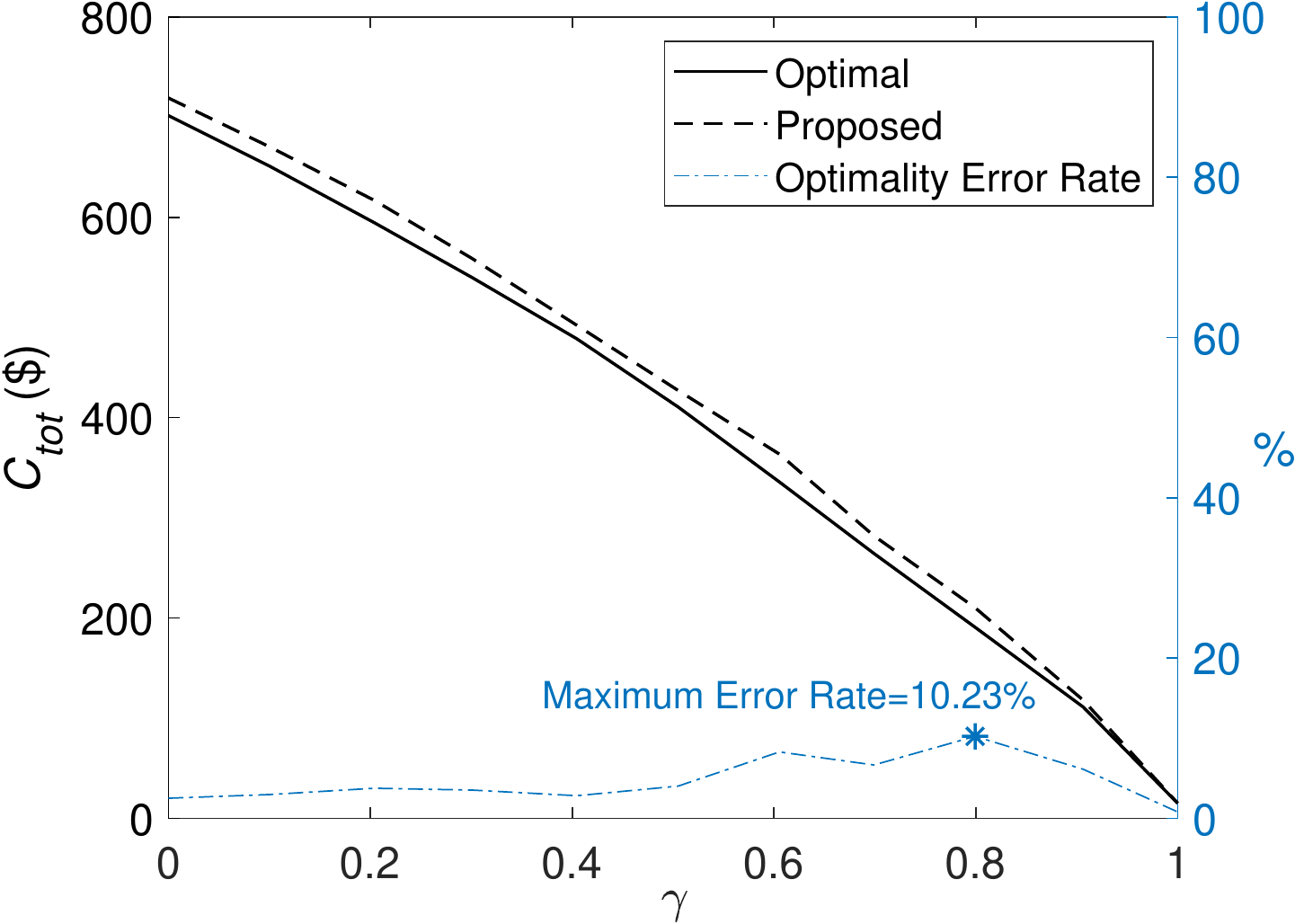}}
\subfigure[\label{tradeoff} $t_\textnormal{tot}$ and $e_\textnormal{tot}$ trade-off ]{\includegraphics[width=3.47cm,height=3.1cm]{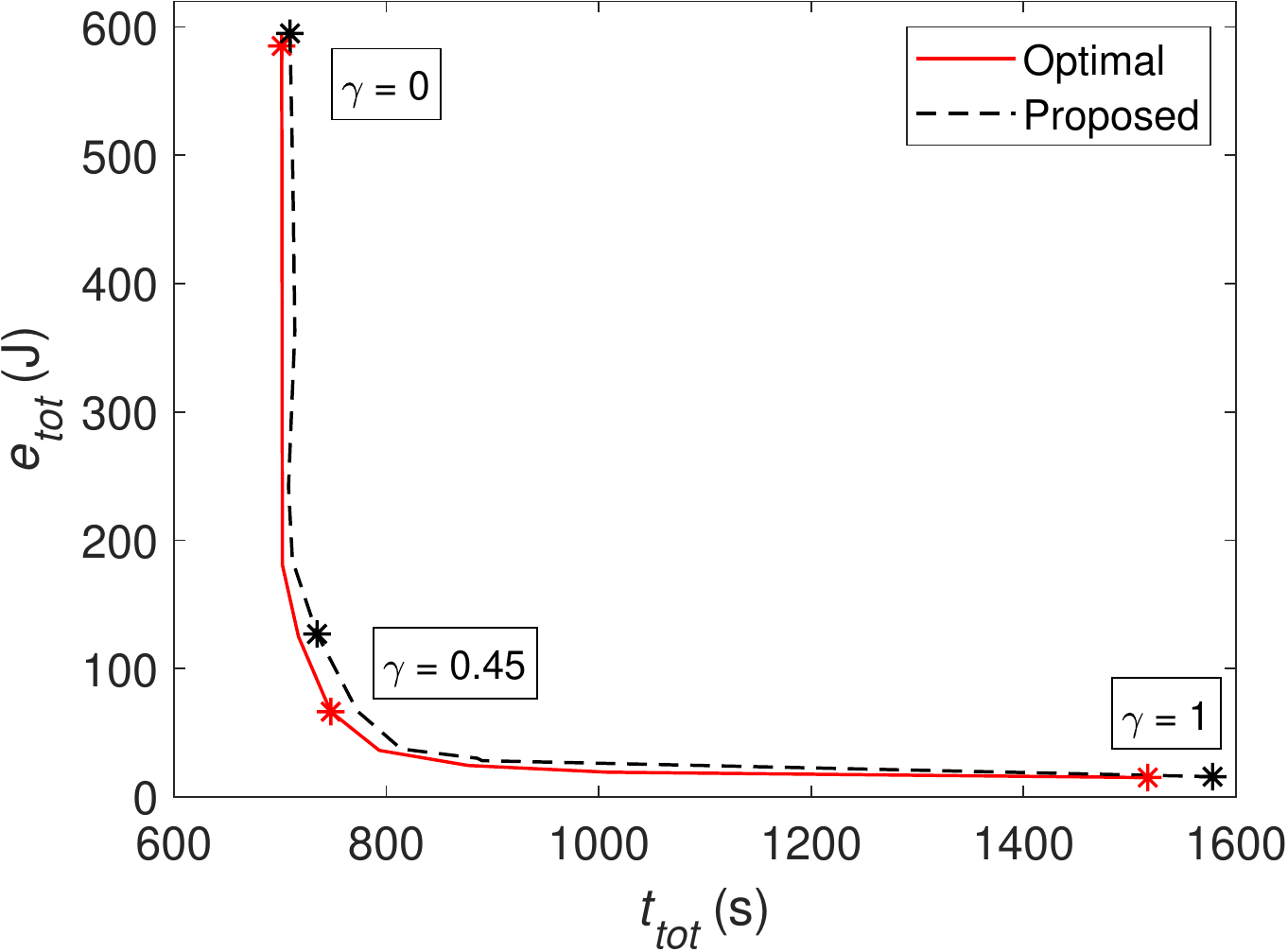}}
\caption{Performance of
$C_\textnormal{tot}$ for reaching the target loss 1.05  %
for %
\textbf{Setup 3} with logistic regression and Synthetic (1,1).  
(a)-(d) When %
$\gamma$ increases from $0$ to $1$, 
    our solutions reaches the target loss using the similar cost as the corresponding optimal ones, with  %
    average optimality error rate $4.85\%$ and maximum rate $10.23\%$.   
       (e) For different $\gamma$, our proposed solutions are able to balance the metric preferences between $t_\textnormal{tot}$ and $e_\textnormal{tot}$ while approaching the optimal trade-off.%
}
\label{c_tot}
\vspace{-0.05in}
\end{figure*}
\subsection{Performance Results}
We first validate the optimality of our proposed solution with estimated value of $\frac{A_0}{B_0}$. Then, we show the impact of the trade-off factor $\gamma$,  followed by %
the verification of our derived theoretical properties.

\subsubsection{Estimation of $\frac{A_0}{B_0}$}
We summarize the estimation process and results of $\frac{A_0}{B_0}$ for all three  experiment setups in Table~\ref{sample3}. 
Specifically, using Algorithm 2, we empirically\footnote{Due to the different learning rates, the sampling range of $E$ in Setup 1 is larger than those in Setup 2 and Setup 3.}  set  two relatively high target losses $F_a$ and $F_b$ with  a few sampling pairs of $\left(K,E\right)$. Then, we record the corresponding number of rounds for reaching $F_a$ and $F_b$, based on which  we calculate the averaged estimation value of $\frac{A_0}{B_0}$ using  \eqref{A0B05}. The proposed solution $K^*$ and $E^*$ is then obtained from Algorithm 2. %
For comparison, we denote %
$K_\textnormal{OPT}$ and $E_\textnormal{OPT}$ as the empirical optimal solution achieved by exhaustive search on %
 $\left(K, E\right)$. 

\subsubsection{Convergence and Optimality}
Figs.~\ref{hd} and \ref{cnn} show the learning time cost %
for reaching the target loss under different  $\left(K, E\right)$  for Setups 1 and 2, respectively.\footnote{For ease of presentation, we only show the convergence performance with $t_\textnormal{tot}$ 
for $\gamma \!= \!0$.}  
In both setups, {our proposed solutions %
achieve near-optimal performance} compared to the empirical optimal solution, with optimality error of $2.61\%$ and $8.49\%$, respectively. %
We highlight that our approach works well with the non-convex CNN model in Setup~2. %
Although the error rate in Setup 2 is higher than that in Setup~1, note that non-optimal values of $(K,E)$ without optimization %
may increase the learning time by several folds. 

Fig.~\ref{c_tot} depicts the  performance of $C_\textnormal{tot}$ for achieving the target loss with $\gamma$  under different $\left(K, E\right)$ in Setup 3, where our proposed solutions achieve near-optimal performance %
for all values of $\gamma$. %
Particularly, Fig.~\ref{tradeoffctot} shows that, throughout the range of $\gamma$, our approach has 
the maximum optimality error of $10.23\%$ and an average error of $4.85\%$.

\subsubsection{Impact of Weight $\gamma$}
Figs.~\ref{tot_K_E} -- \ref{etot_K_E} show that when the weight $\gamma$ increases from $0$ to $1$ (corresponding to the design preference varying from reducing learning time to saving energy consumption), both the optimal and our proposed solutions of $K$ decrease from $N\!=\!100$ to $1$. At the same time, both the optimal and our proposed solutions of $E$, although with a small difference, decrease slightly as $\gamma$ increases. We give the explanations of these observations in Section~\ref{sec:propertyValidation} below. %
 By iterating through the entire range of $\gamma$,  Fig.~\ref{tradeoff} depicts the trade-off curve between the learning time cost and energy consumption cost, where our algorithm is capable of balancing the two metrics as well as approaching the optimal.

\subsubsection{Property Validation} 
\label{sec:propertyValidation}
We highlight that our derived theoretical properties of $K$ and $E$ in Section~\ref{sec:property} can be validated empirically, %
which we summarize as follows.\footnote{The property of Corollaries~\ref{corollary:t_tot_E2} and \ref{corollary:e_tot_E1}  can also be validated, which we do not show in this paper due to page limitation.}

\begin{itemize}
  \item   Figs.~\ref{hd_t_tot}, \ref{cnn_t_tot}, and \ref{tot_K_E} demonstrate that %
  for any fixed value $E$, %
    the learning time cost ($\gamma\!=\!0$) %
    strictly decreases in $K$,  which confirms the claim in Theorem 2 that  sampling  more  clients  can speed up learning. 
   Moreover, we observe in these figures that sampling  fewer clients (15 out of 20 for Setup 1 and 20 out of 100 for Setups 2 and 3) does not affect the learning time much, which  confirms our  Remark. %
  Nevertheless, Fig.~\ref{etot_K_E} shows that %
  for any fixed value $E$,  the energy consumption cost ($\gamma\!=\!1$) %
  strictly increases in $K$, which confirms Theorem 3 that sampling fewer clients reduces energy consumption. 
  
  \item  Figs.~\ref{hd_t_tot}, \ref{cnn_t_tot}, \ref{tot_K_E}--\ref{etot_K_E} demonstrate that,  for any fixed value of $K$, %
   the corresponding cost first decreases and then increases as $E$ increases, which confirms Corollaries~1 and 3 as well as the biconvex property in Theorem~1. {Since $\frac{e_m}{t_m}=\frac{e_p}{t_p}=10^{-2}$ in our simulation system, we observe from Figs.~\ref{hd_t_tot}, \ref{cnn_t_tot}, and \ref{tot_K_E} that both $K^\ast$ and $E^\ast$  decrease as $\gamma$ increases, which confirms Theorem~\ref{K_E_gamma}.}

\end{itemize}

\section{Conclusion}
\label{sec:conclusion}

In this work, we have studied the cost-effective design for FL.
We analyzed how  to optimally choose the  number  of participating  clients  ($K$)  and  the  number  of  local  iterations ($E$), which are two essential control variables in FL, to minimize the total cost  %
while ensuring convergence. 
We proposed a sampling-based control algorithm which efficiently solves the optimization problem with marginal overhead. We also derived 
insightful solution properties which helps identify the design principles for different optimization goals, e.g., reducing learning time or saving energy. 
Extensive experimentation results validated our theoretical analysis and demonstrated the effectiveness and efficiency of our control algorithm. 
Our optimization design is orthogonal to most works on resource allocation for FL systems, and can be used together with those techniques to further reduce the cost.

%\clearpage

\bibliographystyle{IEEEtran}
\bibliography{ref}

% Generated by IEEEtran.bst, version: 1.14 (2015/08/26)
\begin{thebibliography}{10}
\providecommand{\url}[1]{#1}
\csname url@samestyle\endcsname
\providecommand{\newblock}{\relax}
\providecommand{\bibinfo}[2]{#2}
\providecommand{\BIBentrySTDinterwordspacing}{\spaceskip=0pt\relax}
\providecommand{\BIBentryALTinterwordstretchfactor}{4}
\providecommand{\BIBentryALTinterwordspacing}{\spaceskip=\fontdimen2\font plus
\BIBentryALTinterwordstretchfactor\fontdimen3\font minus
  \fontdimen4\font\relax}
\providecommand{\BIBforeignlanguage}[2]{{%
\expandafter\ifx\csname l@#1\endcsname\relax
\typeout{** WARNING: IEEEtran.bst: No hyphenation pattern has been}%
\typeout{** loaded for the language `#1'. Using the pattern for}%
\typeout{** the default language instead.}%
\else
\language=\csname l@#1\endcsname
\fi
#2}}
\providecommand{\BIBdecl}{\relax}
\BIBdecl

\bibitem{kairouz2019advances}
P.~Kairouz, H.~B. McMahan, B.~Avent, A.~Bellet, M.~Bennis, A.~N. Bhagoji,
  K.~Bonawitz, Z.~Charles, G.~Cormode, R.~Cummings \emph{et~al.}, ``Advances
  and open problems in federated learning,'' \emph{arXiv preprint
  arXiv:1912.04977}, 2019.

\bibitem{mcmahan2017communication}
B.~McMahan, E.~Moore, D.~Ramage, S.~Hampson, and B.~A. y~Arcas,
  ``Communication-efficient learning of deep networks from decentralized
  data,'' in \emph{Artificial Intelligence and Statistics}, 2017, pp.
  1273--1282.

\bibitem{konevcny2016federated1}
J.~Kone{\v{c}}n{\`y}, H.~B. McMahan, D.~Ramage, and P.~Richt{\'a}rik,
  ``Federated optimization: Distributed machine learning for on-device
  intelligence,'' \emph{arXiv preprint arXiv:1610.02527}, 2016.

\bibitem{yang2019federated}
Q.~Yang, Y.~Liu, T.~Chen, and Y.~Tong, ``Federated machine learning: Concept
  and applications,'' \emph{ACM Transactions on Intelligent Systems and
  Technology (TIST)}, vol.~10, no.~2, pp. 1--19, 2019.

\bibitem{li2020federated}
T.~Li, A.~K. Sahu, A.~Talwalkar, and V.~Smith, ``Federated learning:
  Challenges, methods, and future directions,'' \emph{IEEE Signal Processing
  Magazine}, vol.~37, no.~3, pp. 50--60, 2020.

\bibitem{park2019wireless}
J.~Park, S.~Samarakoon, M.~Bennis, and M.~Debbah, ``Wireless network
  intelligence at the edge,'' \emph{Proceedings of the IEEE}, vol. 107, no.~11,
  pp. 2204--2239, 2019.

\bibitem{sattler2019robust}
F.~Sattler, S.~Wiedemann, K.-R. M{\"u}ller, and W.~Samek, ``Robust and
  communication-efficient federated learning from non-iid data,'' \emph{IEEE
  Transactions on Neural Networks and Learning Systems}, 2019.

\bibitem{bonawitz2019towards}
K.~Bonawitz, H.~Eichner, W.~Grieskamp, D.~Huba, A.~Ingerman, V.~Ivanov,
  C.~Kiddon, J.~Kone{\v{c}}n{\`y}, S.~Mazzocchi, H.~B. McMahan \emph{et~al.},
  ``Towards federated learning at scale: System design,'' in \emph{Systems and
  Machine Learning (SysML) Conference}, 2019.

\bibitem{smith2017federated}
V.~Smith, C.-K. Chiang, M.~Sanjabi, and A.~S. Talwalkar, ``Federated multi-task
  learning,'' in \emph{Advances in Neural Information Processing Systems
  (NeurIPS)}, 2017, pp. 4424--4434.

\bibitem{li2018federated}
T.~Li, A.~K. Sahu, M.~Zaheer, M.~Sanjabi, A.~Talwalkar, and V.~Smith,
  ``Federated optimization in heterogeneous networks,'' in \emph{Machine
  Learning and Systems (MLSys) Conference}, 2020.

\bibitem{li2019convergence}
X.~Li, K.~Huang, W.~Yang, S.~Wang, and Z.~Zhang, ``On the convergence of fedavg
  on non-iid data,'' in \emph{International Conference on Learning
  Representations (ICLR)}, 2019.

\bibitem{tran2019federated}
N.~H. Tran, W.~Bao, A.~Zomaya, N.~M. NH, and C.~S. Hong, ``Federated learning
  over wireless networks: Optimization model design and analysis,'' in
  \emph{IEEE Conference on Computer Communications (INFOCOM)}, 2019, pp.
  1387--1395.

\bibitem{wang2019adaptive}
S.~Wang, T.~Tuor, T.~Salonidis, K.~K. Leung, C.~Makaya, T.~He, and K.~Chan,
  ``Adaptive federated learning in resource constrained edge computing
  systems,'' \emph{IEEE Journal on Selected Areas in Communications}, vol.~37,
  no.~6, pp. 1205--1221, 2019.

\bibitem{wang2018adaptive}
J.~Wang and G.~Joshi, ``Adaptive communication strategies to achieve the best
  error-runtime trade-off in local-update {SGD},'' in \emph{Systems and Machine
  Learning (SysML) Conference}, 2019.

\bibitem{bonawitz2016practical}
K.~Bonawitz, V.~Ivanov, B.~Kreuter, A.~Marcedone, H.~B. McMahan, S.~Patel,
  D.~Ramage, A.~Segal, and K.~Seth, ``Practical secure aggregation for
  federated learning on user-held data,'' in \emph{NeurIPS Workshop on Private
  Multi-Party Machine Learning}, 2016.

\bibitem{avent2017blender}
B.~Avent, A.~Korolova, D.~Zeber, T.~Hovden, and B.~Livshits, ``{BLENDER}:
  Enabling local search with a hybrid differential privacy model,'' in
  \emph{{USENIX} Security Symposium ({USENIX} Security)}, 2017, pp. 747--764.

\bibitem{konevcny2016federated}
J.~Kone{\v{c}}n{\`y}, H.~B. McMahan, F.~X. Yu, P.~Richt{\'a}rik, A.~T. Suresh,
  and D.~Bacon, ``Federated learning: Strategies for improving communication
  efficiency,'' in \emph{NeurIPS Workshop on Private Multi-Party Machine
  Learning}, 2016.

\bibitem{abadi2016tensorflow}
M.~Abadi, A.~Agarwal, P.~Barham, E.~Brevdo, Z.~Chen, C.~Citro, G.~S. Corrado,
  A.~Davis, J.~Dean, M.~Devin \emph{et~al.}, ``Tensorflow: Large-scale machine
  learning on heterogeneous distributed systems,'' \emph{arXiv preprint
  arXiv:1603.04467}, 2016.

\bibitem{yu2018parallel}
H.~Yu, S.~Yang, and S.~Zhu, ``Parallel restarted {SGD} for non-convex
  optimization with faster convergence and less communication,'' in \emph{AAAI
  Conference on Artificial Intelligence}, 2019.

\bibitem{stich2018local}
S.~U. Stich, ``Local {SGD} converges fast and communicates little,'' in
  \emph{International Conference on Learning Representations (ICLR)}, 2018.

\bibitem{wang2018cooperative}
J.~Wang and G.~Joshi, ``Cooperative {SGD}: A unified framework for the design
  and analysis of communication-efficient {SGD} algorithms,'' in \emph{ICML
  Workshop on Coding Theory for Machine Learning}, 2019.

\bibitem{khaled2019first}
A.~Khaled, K.~Mishchenko, and P.~Richt{\'a}rik, ``First analysis of local {GD}
  on heterogeneous data,'' \emph{arXiv preprint arXiv:1909.04715}, 2019.

\bibitem{zhou2017convergence}
F.~Zhou and G.~Cong, ``On the convergence properties of a k-step averaging
  stochastic gradient descent algorithm for nonconvex optimization,'' in
  \emph{International Joint Conference on Artificial Intelligence (IJCAI)},
  2018, pp. 3219--3227.

\bibitem{liu2020accelerating}
W.~Liu, L.~Chen, Y.~Chen, and W.~Zhang, ``Accelerating federated learning via
  momentum gradient descent,'' \emph{IEEE Transactions on Parallel and
  Distributed Systems}, vol.~31, no.~8, pp. 1754--1766, 2020.

\bibitem{samarakoon2018federated}
S.~Samarakoon, M.~Bennis, W.~Saad, and M.~Debbah, ``Federated learning for
  ultra-reliable low-latency {V2V} communications,'' in \emph{IEEE Global
  Communications Conference (GLOBECOM)}, 2018, pp. 1--7.

\bibitem{zhu2018low}
G.~Zhu, Y.~Wang, and K.~Huang, ``Broadband analog aggregation for low-latency
  federated edge learning,'' \emph{IEEE Transactions on Wireless
  Communications}, vol.~19, no.~1, pp. 491--506, 2019.

\bibitem{nishio2019client}
T.~Nishio and R.~Yonetani, ``Client selection for federated learning with
  heterogeneous resources in mobile edge,'' in \emph{IEEE International
  Conference on Communications (ICC)}, 2019, pp. 1--7.

\bibitem{wang2020optimizing}
H.~Wang, Z.~Kaplan, D.~Niu, and B.~Li, ``Optimizing federated learning on
  non-iid data with reinforcement learning,'' in \emph{IEEE Conference on
  Computer Communications (INFOCOM)}, 2020, pp. 1698--1707.

\bibitem{shi2020device}
W.~Shi, S.~Zhou, and Z.~Niu, ``Device scheduling with fast convergence for
  wireless federated learning,'' in \emph{IEEE International Conference on
  Communications (ICC)}, 2020, pp. 1--6.

\bibitem{chen2020convergence}
M.~Chen, H.~V. Poor, W.~Saad, and S.~Cui, ``Convergence time optimization for
  federated learning over wireless networks,'' \emph{arXiv preprint
  arXiv:2001.07845}, 2020.

\bibitem{han2020adaptive}
P.~Han, S.~Wang, and K.~K. Leung, ``Adaptive gradient sparsification for
  efficient federated learning: An online learning approach,'' in \emph{IEEE
  International Conference on Distributed Computing Systems (ICDCS)}, 2020.

\bibitem{jiang2020pruning}
Y.~Jiang, S.~Wang, V.~Valls, B.~J. Ko, W.-H. Lee, K.~K. Leung, and
  L.~Tassiulas, ``Model pruning enables efficient federated learning on edge
  devices,'' in \emph{Workshop on Scalability, Privacy, and Security in
  Federated Learning (SpicyFL) in Conjunction with NeurIPS}, 2020.

\bibitem{mo2020energy}
X.~Mo and J.~Xu, ``Energy-efficient federated edge learning with joint
  communication and computation design,'' \emph{arXiv preprint
  arXiv:2003.00199}, 2020.

\bibitem{zeng2020energy}
Q.~Zeng, Y.~Du, K.~Huang, and K.~K. Leung, ``Energy-efficient resource
  management for federated edge learning with {CPU-GPU} heterogeneous
  computing,'' \emph{arXiv preprint arXiv:2007.07122}, 2020.

\bibitem{yang2019energy}
Z.~Yang, M.~Chen, W.~Saad, C.~S. Hong, and M.~Shikh-Bahaei, ``Energy efficient
  federated learning over wireless communication networks,'' \emph{arXiv
  preprint arXiv:1911.02417}, 2019.

\bibitem{tu2020network}
Y.~Tu, Y.~Ruan, S.~Wang, S.~Wagle, C.~G. Brinton, and C.~Joe-Wang,
  ``Network-aware optimization of distributed learning for fog computing,''
  \emph{arXiv preprint arXiv:2004.08488}, 2020.

\bibitem{luping2019cmfl}
W.~Luping, W.~Wei, and L.~Bo, ``{CMFL}: Mitigating communication overhead for
  federated learning,'' in \emph{IEEE International Conference on Distributed
  Computing Systems (ICDCS)}, 2019, pp. 954--964.

\bibitem{hsieh2017gaia}
K.~Hsieh, A.~Harlap, N.~Vijaykumar, D.~Konomis, G.~R. Ganger, P.~B. Gibbons,
  and O.~Mutlu, ``Gaia: Geo-distributed machine learning approaching {LAN}
  speeds,'' in \emph{{USENIX} Symposium on Networked Systems Design and
  Implementation ({NSDI})}, 2017, pp. 629--647.

\bibitem{jin2020resource}
Y.~Jin, L.~Jiao, Z.~Qian, S.~Zhang, S.~Lu, and X.~Wang, ``Resource-efficient
  and convergence-preserving online participant selection in federated
  learning,'' in \emph{IEEE International Conference on Distributed Computing
  Systems (ICDCS)}, 2020.

\bibitem{gorski2007biconvex}
J.~Gorski, F.~Pfeuffer, and K.~Klamroth, ``Biconvex sets and optimization with
  biconvex functions: a survey and extensions,'' \emph{Mathematical methods of
  operations research}, vol.~66, no.~3, pp. 373--407, 2007.

\bibitem{schlote2005bl}
K.-H. Schlote, ``Bl van der waerden, moderne algebra, (1930--1931),'' in
  \emph{Landmark Writings in Western Mathematics 1640-1940}.\hskip 1em plus
  0.5em minus 0.4em\relax Elsevier, 2005, pp. 901--916.

\bibitem{lecun1998gradient}
Y.~LeCun, L.~Bottou, Y.~Bengio, and P.~Haffner, ``Gradient-based learning
  applied to document recognition,'' \emph{Proceedings of the IEEE}, vol.~86,
  no.~11, pp. 2278--2324, 1998.

\end{thebibliography}
\end{document}